\definecolor{Gray}{gray}{0.85}
\newif\ifpaper
\def\RR{\mathbb{R}}
\def\>{\rangle}
\def\rank{\operatorname{\textit{rank}}}
\def\Set#1{\left\{ #1 \right\}}
\def\Bigbar#1{\mathrel{\left|\vphantom{#1}\right.}}
\def\Setbar#1#2{\Set{#1 \Bigbar{#1 #2} #2}}
\newtheorem{theorem}{Theorem}[section]
\newtheorem{lemma}[theorem]{Lemma}
\newtheorem{definition}[theorem]{Definition}
\newtheorem{proposition}[theorem]{Proposition}
\newtheorem{assumptions}[theorem]{Assumption}
\newtheorem{property}[theorem]{Property}
\newenvironment{proof}{\par\noindent{\bf Proof:\ }}{\hfill$\Box$\\[2mm]}
\newenvironment{proof1}{\par\noindent{}}{\hfill$\Box$\\[2mm]}
\newcommand{\Id}{\mathbb{I}}
\def\Span{\textrm{Span}}
\newcommand{\norm}[1]{\left\|#1\right\|}
\def\bydef{\mathrel{\mathop:}=}
\def\range{\mathop{\rm range}\nolimits}
\def\ker{\mathop{\rm ker}\nolimits}
\def\dim{\mathop{\rm dim}\nolimits}
\def\min{\mathop{\rm min}\nolimits}
\def\max{\mathop{\rm max}\nolimits}
\def\ones{\mathbf{1}}
\def\ie{\textit{i.e. }}
\icmltitlerunning{On Connected Sublevel Sets in Deep Learning}
\begin{document}

\twocolumn[
\icmltitle{On Connected Sublevel Sets in Deep Learning}



\icmlsetsymbol{equal}{*}

\begin{icmlauthorlist}
\icmlauthor{Quynh Nguyen}{saar}
\end{icmlauthorlist}

\icmlaffiliation{saar}{Department of Mathematics and Computer Science, Saarland University, Germany}

\icmlcorrespondingauthor{Quynh Nguyen}{quynh@cs.uni-saarland.de}

\icmlkeywords{loss landscape, deep neural networks, local minima}

\vskip 0.3in
]



\printAffiliationsAndNotice{}  

\begin{abstract}

This paper shows that every sublevel set of the loss function of a class of deep over-parameterized neural nets with piecewise linear activation functions
is connected and unbounded.
This implies that the loss has no bad local valleys and all of its global minima are connected
within a unique and potentially very large global valley.

\end{abstract}

\section{Introduction}
It has been commonly observed in deep learning that over-parameterization can be helpful for optimizing deep neural networks.
In particular, several recent work \cite{AllenZhuEtal2018,DuEtAl2018_GD,ZouEtal2018} 
have shown that if ``all the hidden layers'' of a deep network have polynomially large number of neurons 
compared to the number of training samples and the network depth,
then (stochastic) gradient descent converges to a global minimum with zero training error.
While these theoretical guarantees are interesting conceptually, 
it remains largely unclear why this kind of simple local search algorithms 
can succeed given the well-known non-convexity and NP-Hardness of the problem.
We are interested in the following questions: 

\textit{Why local search algorithms such as (stochastic) gradient descent do not 
seem to get stuck at bad valleys under excessive over-parameterization regimes? 
Is there any geometric structure of the loss function that can ``intuitively'' support
for the successes of these algorithms under such regimes?}

In this paper, we shed light on these questions by showing that 
every sublevel set of the loss is connected if ``one of the hidden layers'' is wide enough.
While connectivity of sublevel sets does not ensure that gradient descent always converges to a global minimum from arbitrary initialization,
such simple geometric structure still arguably makes the loss function much more favorable to local search algorithms 
than any other ``wildly'' non-convex functions.
An example function which satisfies this property is shown in Figure \ref{fig:connected_sublevel_set}.
\begin{figure}
\vspace{-10pt}
\begin{center}
\includegraphics[width=0.7\columnwidth]{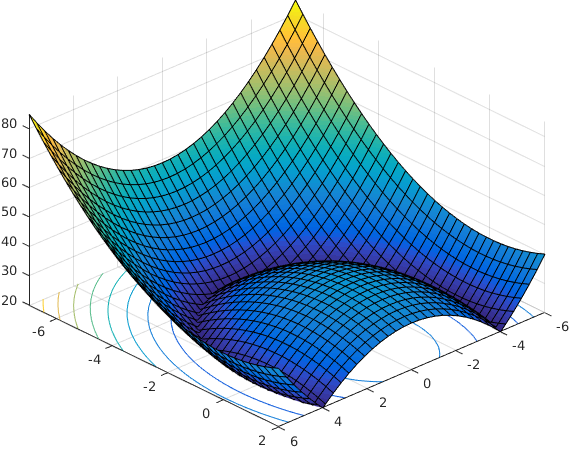}
\end{center}
\vspace{-8pt}
\caption{A non-convex function with connected sublevel sets.}
\label{fig:connected_sublevel_set}
\vspace{-15pt}
\end{figure}

The key idea of our paper is to first prove the connectivity of sublevel sets for arbitrary sized neural networks 
in the regime where the training data are linearly independent.
Then we extend such result to ``arbitrary training data'' by assuming that the network has a wide hidden layer.
More specifically, we show that if one of the hidden layers has more neurons than the number of training samples,
then the loss function has no bad local valleys in the sense that 
there is a continuous path from any starting point in parameter space on which the loss is non-increasing and 
gets arbitrarily close to its (asymptotic) minimal value.
For a special case of the network where the first hidden layer has twice more neurons than the number of training samples,
we show that every sublevel set of the loss becomes connected. 
This is a stronger property than before as it not only implies that 
the loss has no bad local valleys,
but also that all finite global minima (if exist) are connected within a unique valley.
All our results hold for standard deep fully connected networks with arbitrary convex losses and piecewise linear activation functions.
All missing proofs can be found in the appendix.

\begin{figure*}
\begin{center}
\includegraphics[height=0.45\columnwidth]{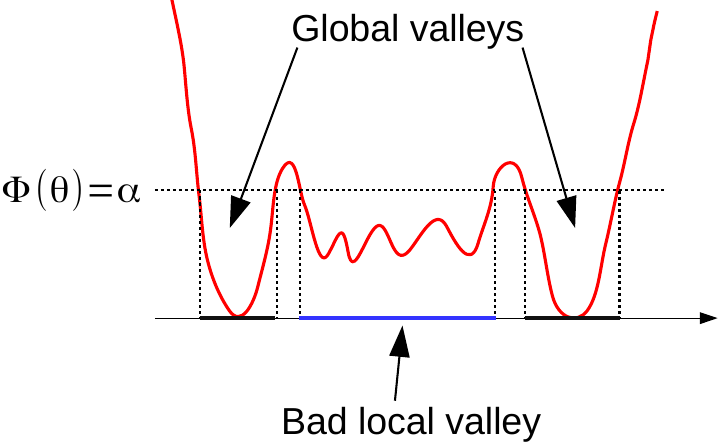}\hspace{10pt}
\includegraphics[height=0.45\columnwidth]{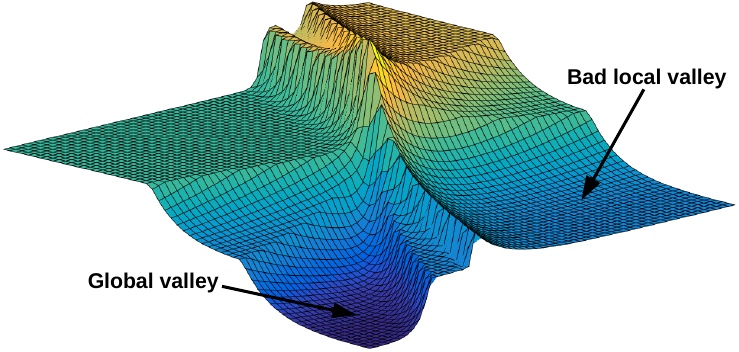}
\end{center}
\caption{Examples of bad local valleys in one and two dimension. 
(Recall from Definition \ref{def:bad_valley}: A bad local valley is a connected component of some strict sublevel set
on which the loss cannot be made arbitrarily close to the infimum. 
Intuitively, bad local valleys are regions in the search space where gradient descent can easily get stuck.)}
\label{fig:bad_valleys}
\end{figure*}
\section{Background}
Let $N$ be the number of training samples and $X=[x_1,\ldots,x_N]^T\in\RR^{N\times d}$ the training data with $x_i\in\RR^d.$
Let $L$ be the number of layers of the network, 
$n_k$ the number of neurons at layer $k,$
$d$ the input dimension, $m$ the output dimension,
and $W_k\in\RR^{n_{k-1}\times n_k}$ and $b_k\in\RR^{n_k}$ the weight matrix and biases respectively of layer $k.$
By convention, we assume that $n_0=d$ and $n_L=m.$
Let $\sigma:\RR\to\RR$ be a continuous activation function specified later.
The network output at layer $k$ is the matrix $F_k\in\RR^{N\times n_k}$ defined as
\begin{align}\label{eq:F_k}
    F_k = 
    \begin{cases}
	X & k=0\\
	\sigma \big( F_{k-1} W_k + \ones_N b_k^T \big) & 1\leq k\leq L-1\\
	F_{L-1} W_L + \ones_N b_L^T & k=L
    \end{cases}
\end{align}
Let $\theta\bydef (W_l,b_l)_{l=1}^L$ be the set of all parameters.
Let $\Omega_l$ be the space of $(W_l,b_l)$ for every layer $l\in[1,L],$
and $\Omega=\Omega_1\times\ldots\times\Omega_L$ the whole parameter space.
Let $\Omega_l^*\subset\Omega_l$ be the subset of parameters of layer $l$ for which 
the corresponding weight matrix has full rank, that is $\Omega_l^*=\Setbar{(W_l,b_l)}{W_l \textrm{ has full rank}}.$
In this paper, we often write $F_k(\theta)$ to denote the network output at layer $k$ as a function $\theta$,
but sometimes we drop the argument $\theta$ and write just $F_k$ if it is clear from the context.
We also use the notations $F_k\Big((W_1,b_1),\ldots,(W_L,b_L)\Big),F_k\Big((W_1,b_1),(W_l,b_l)_{l=2}^L\Big).$
The training loss $\Phi:\Omega\to\RR$ is defined as
\begin{align}\label{eq:train_loss}
    \Phi(\theta) = \varphi(F_L(\theta))
\end{align}
where $\varphi:\RR^{N\times m}\to\RR$ is assumed to be any convex loss.
Typical examples include the standard cross-entropy loss
$\varphi(F_L) = \frac{1}{N}\sum_{i=1}^N - \log \Big(\frac{e^{(F_L)_{iy_i}}}{\sum_{k=1}^m e^{(F_L)_{ik}}}\Big)$
where $y_i$ is the ground-truth class of $x_i,$
and the standard square loss for regression
$\varphi(F_L)=\frac{1}{2}\norm{F_L-Y}_F^2$ 
where $Y\in\RR^{N\times m}$ is a given training output.

In this paper, we denote $p^*=\inf_{G\in\RR^{N\times m}}\varphi(G)$ which serves as a lower bound on $\Phi$.
Note that $p^*$ is fully determined by the choice of $\varphi(\cdot)$ and thus is independent of the training data.
Please also note that we make no assumption about finiteness of $p^*$ in this paper
although for most of practical loss functions as mentioned above one has $p^*=0.$
We list below several assumptions on the activation function
and will refer to them accordingly in our different results.
\begin{assumptions}\label{ass:act1}
    $\sigma$ is strictly monotonic and $\sigma(\RR)=\RR.$
\end{assumptions}
Note that Assumption \ref{ass:act1} implies that $\sigma$ has a continuous inverse $\sigma^{-1}:\RR\to\RR,$
which is satisfied for Leaky-ReLU.
\begin{assumptions}\label{ass:act2}
    There do not exist non-zero coefficients $(\lambda_i,a_i)_{i=1}^p$ with $a_i\neq a_j\,\forall\,i\neq j$ 
    such that $\sigma(x)=\sum_{i=1}^p\lambda_i\sigma(x-a_i)$ for every $x\in\RR.$
\end{assumptions}
Assumption \ref{ass:act2} is satisfied for every piecewise linear activation functions except the linear one as shown below.
\begin{lemma}\label{lem:exp_act}
    Assumption \ref{ass:act2} is satisfied for any continuous piecewise linear activation function 
    with at least two pieces such as ReLU, Leaky-ReLU, etc
    and for the exponential linear unit 
    $\sigma(x)=\begin{cases}x&x\ge0\\ \alpha(e^x-1)&x<0\end{cases} \textrm{ where }\alpha>0.$
\end{lemma}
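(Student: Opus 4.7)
The plan is to argue by contradiction, assuming a non-trivial identity $\sigma(x)=\sum_{i=1}^p \lambda_i \sigma(x-a_i)$ holds for every $x\in\RR$ with the $a_i$'s pairwise distinct, $\lambda_i\ne 0$, and (implicit in ``non-zero coefficients'', since otherwise the tautology $p=1,\ a_1=0,\ \lambda_1=1$ would already violate the assumption) all $a_i\ne 0$. The two cases of the lemma are handled by different tools but share the same flavor: exploit a local feature of $\sigma$ that shifted copies cannot simultaneously mimic.

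For a continuous piecewise linear $\sigma$ with breakpoints $t_1<\ldots<t_k$ ($k\ge 1$) and non-zero slope jumps $c_j$, the distributional second derivative is $\sigma''=\sum_{j=1}^k c_j\,\delta_{t_j}$. Differentiating the claimed identity twice in the sense of distributions yields
\[
    \sum_{j=1}^k c_j\,\delta_{t_j} \;=\; \sum_{i=1}^p\sum_{j=1}^k \lambda_i c_j\,\delta_{t_j+a_i}.
\]
Let $a_{\min}=\min_i a_i$, uniquely attained by some index $i_0$ since the $a_i$ are distinct. The atom at the leftmost point $t_1+a_{\min}$ on the right then contributes exactly $\lambda_{i_0}c_1\ne 0$, so it must be matched by an atom on the left, forcing $t_1+a_{\min}\in\{t_1,\ldots,t_k\}$ and hence $a_{\min}\ge 0$. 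A symmetric argument at the rightmost atom gives $a_{\max}\le 0$. Combined with distinctness this collapses to $p=1,\ a_1=0$, contradicting $a_i\ne 0$.

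For ELU, reorder so $a_1<\ldots<a_p$ and seek an open interval $I$ on which $\sigma$ and all but one translate $\sigma(x-a_i)$ lie in their linear piece, while the remaining translate lies in its exponential piece; the mismatch between an affine function and an exponential then pins the exceptional coefficient to zero. Concretely, if $a_p>0$ take $I=(\max(0,a_{p-1}),a_p)$; on $I$ the identity reduces to
\[
    x \;=\; \sum_{i=1}^{p-1}\lambda_i(x-a_i) + \lambda_p\alpha(e^{x-a_p}-1),
\]
and differentiating once forces $\lambda_p\alpha e^{x-a_p}$ to be constant, hence $\lambda_p=0$. If instead $a_p\le 0$ then all $a_i<0$, and the analogous argument on $I=(a_1,\min(0,a_2))$ (with $a_2=+\infty$ when $p=1$) forces $\lambda_1=0$. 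The main obstacle is the bookkeeping around the kink at $0$: one must verify $I$ is non-empty and correctly track which piece each translate lies in, both of which rely crucially on $a_i\ne 0$. In the piecewise linear case the parallel subtlety is the uniqueness of the extremal atom, which in turn relies on distinctness of the $a_i$'s to rule out cancellations.
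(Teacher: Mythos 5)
Your proposal is correct, but it takes a genuinely different route from the paper for the piecewise-linear case, and a mildly different route for ELU. For the piecewise-linear half, the paper argues by directly comparing slopes on carefully chosen adjacent intervals, splitting on $y_1>0$ versus $y_1<0$; you instead observe that the distributional second derivative of a continuous piecewise-linear $\sigma$ is a finite sum of Diracs at the breakpoints, so the identity $\sigma=\sum_i\lambda_i\sigma(\cdot-a_i)$ transports to an equality of atomic measures, and the extremal atoms on the right (at $t_1+a_{\min}$ and $t_k+a_{\max}$, uniquely attained and non-cancelled because the $a_i$ are distinct and the slope jumps $c_j$ are non-zero) force $a_{\min}\geq 0$ and $a_{\max}\leq 0$, collapsing $p$ to $1$ with $a_1=0$. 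This is cleaner: it dispenses with the interval bookkeeping and the two-case split, and it would generalize naturally to any activation whose distributional derivative of some order is purely atomic. For the ELU half, both arguments hinge on the same fact (that $e^x$ cannot coincide with an affine function on an open interval) and both split on the sign of the extremal shift; the only difference is that in the all-negative case the paper works on $(y_m,0)$, where $\sigma(x)$ alone is exponential, while you work on $(a_1,\min(0,a_2))$, where $\sigma(x-a_1)$ alone is linear -- your summary ``all but one translate lie in their linear piece'' describes the $a_p>0$ case accurately but not this second case, though the actual computation you sketch still closes. One small point worth stating explicitly in both halves is that the interpretation of ``non-zero coefficients'' must include $a_i\neq 0$ for all $i$ (consistent with how Assumption 2 is invoked in Lemma \ref{lem:span}); you do flag this, and the paper implicitly assumes the same by only considering $y_1>0$ and $y_1<0$.
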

\ifpaper
\begin{proof}
    A function $\sigma:\RR\to\RR$ is continuous piecewise linear with at least two pieces if it can be represented as
    \begin{align*}
	\sigma(x) = a_i x + b_i,\quad \forall\,x\in(x_{i-1}, x_i),\,\forall\,i\in[1,n+1].
    \end{align*}
    for some $n\geq 1, x_0=-\infty<x_1<\ldots<x_n<x_{n+1}=\infty$ and $(a_i,b_i)_{i=1}^{n+1}.$
    We can assume that all the linear pieces agree at their intersection
    and there are no consecutive pieces with the same slope: $a_i\neq a_{i+1}$ for every $i\in[1,n].$
    Suppose by contradiction that $\sigma$ does not satisfy Assumption \ref{ass:act2},
    then there are non-zero coefficients $(\lambda_i,y_i)_{i=1}^m$ with $y_i\neq y_j (i\neq j)$ such that 
    $\sigma(x)=\sum_{i=1}^m\lambda_i\sigma(x-y_i)$ for every $x\in\RR.$
    We assume w.l.o.g. that $y_1<\ldots<y_m.$
    
    \underline{Case 1: $y_1>0.$}
    For every $x\in(-\infty, x_1)$ we have $\sigma(x)=a_1x+b_1=\sum_{i=1}^m\lambda_i(a_1(x-y_i)+b_1)$
    and thus by comparing the coefficients on both sides we obtain 
	$\sum_{i=1}^m\lambda_i a_1=a_1.$
    Moreover, for every $x\in\big(x_1,\min(x_1+y_1,x_2)\big)$ it holds $\sigma(x)=a_2x+b_2=\sum_{i=1}^m\lambda_i(a_1(x-y_i)+b_1)$
    and so
	$\sum_{i=1}^m\lambda_i a_1=a_2.$
    Thus $a_1=a_2,$ which is a contradiction.
    
    \underline{Case 2: $y_1<0.$}
    By definition, for $x\in(-\infty, x_1+y_1)$ we have $\sigma(x)=a_1x+b_1=\sum_{i=1}^m\lambda_i(a_1(x-y_i)+b_1)$
    and thus by comparing the coefficients on both sides we obtain 
    \begin{align}\label{eq:rd}
	\sum_{i=1}^m\lambda_i a_1=a_1.
    \end{align}
    For $x\in\big(x_1+y_1,\min(x_1+y_2,x_1,x_2+y_1)\big)$ 
    it holds 
    \begin{align*}
	\sigma(x)
	&=a_1x+b_1\\
	&=\lambda_1(a_2(x-y_1)+b_2)+\sum_{i=2}^m\lambda_i(a_1(x-y_i)+b_1)
    \end{align*}
    and thus by comparing the coefficients we have
    \begin{align*}
	\lambda_1 a_2+\sum_{i=2}^m\lambda_i a_1=a_1.
    \end{align*}
    This combined with \eqref{eq:rd} leads to $\lambda_1 a_1=\lambda_1 a_2,$
    and thus $a_1=a_2$ (since $\lambda_1\neq 0$) which is a contradiction.
    
    One can prove similarly for ELU \cite{ELU2016}
    $$\sigma(x)=\begin{cases}x&x\ge0\\ \alpha(e^x-1)&x<0\end{cases} \textrm{ where }\alpha>0.$$
    Suppose by contradiction that there exist non-zero coefficients $(\lambda_i,y_i)_{i=1}^m$ with $y_i\neq y_j(i\neq j)$
    such that $\sigma(x)=\sum_{i=1}^m\lambda_i\sigma(x-y_i),$ and assume w.l.o.g. that $y_1<\ldots<y_m.$
    If $y_m>0$ then for every $x\in(\max(0,y_{m-1}),y_m)$ it holds
    \begin{align*}
	&\sigma(x)=x=\lambda_m\alpha(e^{x-y_m}-1) + \sum_{i=1}^{m-1}\lambda_i(x-y_i)\\
	\Rightarrow\quad& e^x = \frac{xe^{y_m}-\sum_{i=1}^{m-1}\lambda_i(x-y_i)e^{y_m}}{\lambda_m\alpha} + e^{y_m}
    \end{align*}
    which is a contradiction since $e^x$ cannot be identical to any affine function on any open interval.
    Thus it must hold that $y_m<0.$ 
    But then for every $x\in(y_m,0)$ we have
    \begin{align*}
	&\sigma(x)=\alpha(e^x-1)=\sum_{i=1}^m\lambda_i(x-y_i)\\
	\Rightarrow\quad& e^x=\frac{1}{\alpha}\sum_{i=1}^m\lambda_i(x-y_i)+1 
    \end{align*}
    which is a contradiction for the same reason above.
\end{proof}
\fi
Through out the rest of this paper, we will make the following mild assumption on our training data.
\begin{assumptions}\label{ass:data}
    All the training samples are distinct.
\end{assumptions}
A key concept of this paper is the sublevel set of a function.
\begin{definition}\label{def:sublevel_set}
    For every $\alpha\in\RR$, 
    the $\alpha$-level set of $\Phi:\Omega\to\RR$ is the preimage
    $\Phi^{-1}(\alpha)=\Setbar{\theta\in\Omega}{\Phi(\theta)=\alpha},$
    and the $\alpha$-sublevel set of $\Phi$ is given as
    $L_{\alpha}=\Setbar{\theta\in\Omega}{\Phi(\theta) \leq \alpha}.$
\end{definition}
Below we recall the standard definition of connected sets and some basic properties which are used in this paper.
\begin{definition}[Connected set]\label{def:connected_set}
    A subset $S\subseteq\RR^d$ is called connected if for every $x,y\in S$, 
    there exists a continuous curve $r: [0,1]\to S$ such that $r(0)=x$ and $r(1)=y.$
\end{definition}
\begin{proposition}\label{prop:connected_continuous_map}
    Let $f:U\to V$ be a continuous map.
    If $A\subseteq U$ is connected then $f(A)\subseteq V$ is also connected.
\end{proposition}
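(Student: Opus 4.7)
The plan is to apply the definition of connectedness from Definition \ref{def:connected_set} directly. Given two arbitrary points in $f(A)$, I will pull them back to preimages in $A$, use the hypothesis to join these preimages by a continuous curve in $A$, and then push the curve forward through $f$ to obtain a continuous curve in $f(A)$ joining the original two points.

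More concretely, I would pick arbitrary $x', y' \in f(A)$, and by definition of the image choose $x, y \in A$ with $f(x) = x'$ and $f(y) = y'$. Since $A$ is connected, the definition provides a continuous curve $r:[0,1] \to A$ with $r(0) = x$ and $r(1) = y$. I would then define $s:[0,1] \to f(A)$ by $s(t) = f(r(t))$. Note that $s$ is well-defined with codomain $f(A)$ because $r$ takes values in $A$. Continuity of $s$ follows from continuity of $f$ and $r$ (composition of continuous maps is continuous), and the endpoints satisfy $s(0) = f(x) = x'$ and $s(1) = f(y) = y'$. Hence $f(A)$ meets Definition \ref{def:connected_set}.

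There is essentially no obstacle in this argument; it is a direct unwinding of the definition. The only subtlety worth noting is that one must insist the pullback curve $r$ has codomain $A$ rather than $U$, so that $s = f \circ r$ is guaranteed to land inside $f(A)$ rather than merely in $V$. This is automatic from the hypothesis that $A$ itself is the connected set.
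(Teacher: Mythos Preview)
Your proposal is correct and follows essentially the same approach as the paper's own proof: pick two points in $f(A)$, choose preimages in $A$, connect them via a continuous curve $r$ in $A$, and observe that $f\circ r$ is a continuous curve in $f(A)$ joining the original points. The only difference is cosmetic (your variable names and the extra remark about the codomain), so there is nothing further to add.
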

\ifpaper
\begin{proof}
    Pick some $a,b\in f(A)$ and let $x,y\in A$ be such that $f(x)=a$ and $f(y)=b.$
    Since $A$ is connected, there is a continuous curve $r: [0,1]\to A$ so that $r(0)=x,r(1)=y.$  
    Consider the curve $f\circ r:[0,1]\to f(A),$ then it holds that $f(r(0))=a, f(r(1))=b.$  
    Moreover, $f\circ r$ is continuous as both $f$ and $r$ are continuous.
    Thus it follows from Definition \ref{def:connected_set} that $f(A)$ is a connected.
\end{proof}
\fi
\begin{proposition}\label{prop:minkowski}
    The Minkowski sum of two connected subsets $U,V\subseteq\RR^n$,
    defined as $U+V=\Setbar{u+v}{u\in U,v\in V}$, is a connected set.
\end{proposition}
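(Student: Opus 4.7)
The plan is to reduce the statement to Proposition \ref{prop:connected_continuous_map} by recognizing $U+V$ as the continuous image of a connected set. Specifically, consider the addition map $s:\RR^n\times\RR^n\to\RR^n$ defined by $s(u,v)=u+v$; this is clearly continuous, and by construction $U+V = s(U\times V)$. So it suffices to establish that $U\times V$ is connected whenever $U$ and $V$ are, and then invoke Proposition \ref{prop:connected_continuous_map}.

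To see that $U\times V$ is connected in the sense of Definition \ref{def:connected_set}, I would take arbitrary $(u_1,v_1),(u_2,v_2)\in U\times V$, use connectedness of $U$ to get a continuous $r_U:[0,1]\to U$ with $r_U(0)=u_1$ and $r_U(1)=u_2$, and similarly $r_V:[0,1]\to V$ with $r_V(0)=v_1$ and $r_V(1)=v_2$. Then I would concatenate: define $r:[0,1]\to U\times V$ by $r(t)=(r_U(2t),v_1)$ on $[0,\tfrac12]$ and $r(t)=(u_2,r_V(2t-1))$ on $[\tfrac12,1]$. At the gluing point $t=\tfrac12$ both pieces evaluate to $(u_2,v_1)$, so $r$ is continuous, and it connects $(u_1,v_1)$ to $(u_2,v_2)$ inside $U\times V$.

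Alternatively, and perhaps more economically, I can skip the product construction entirely and write down a path in $U+V$ directly: define $r:[0,1]\to U+V$ by $r(t)=r_U(2t)+v_1$ on $[0,\tfrac12]$ and $r(t)=u_2+r_V(2t-1)$ on $[\tfrac12,1]$. The endpoints match the required $u_1+v_1$ and $u_2+v_2$, and continuity at $t=\tfrac12$ follows because both formulas give $u_2+v_1$ there. Either presentation works; I would pick whichever reads more cleanly in the surrounding text.

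The main obstacle is essentially bookkeeping rather than conceptual: one must verify that the concatenated path is continuous at the join point and that it stays inside $U+V$ (respectively $U\times V$) throughout. Both checks are routine, so I do not expect any hidden difficulty. The only mild subtlety worth flagging is that Definition \ref{def:connected_set} is really path-connectedness, so giving an explicit path (rather than appealing to abstract connectedness of products) is both sufficient and natural here.
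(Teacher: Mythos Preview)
Your proposal is correct. The paper's own proof is essentially your second (direct) approach, with one cosmetic simplification: instead of concatenating an ``$L$-shaped'' path (first move in $U$ with $V$ held fixed, then move in $V$), the paper takes paths $p:[0,1]\to U$ and $q:[0,1]\to V$ between the relevant endpoints and sets $r(t)=p(t)+q(t)$ directly. This avoids the gluing check at $t=\tfrac12$ and gives a one-line verification that $r(t)\in U+V$. Your concatenated path and your product-space argument via Proposition~\ref{prop:connected_continuous_map} are both valid alternatives; the paper's diagonal path is just the most economical of the three.
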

In this paper, $A^\dagger$ denotes the Moore-Penrose inverse of $A.$
If $A$ has full row rank then it has a right inverse $A^\dagger=A^T(AA^T)^{-1}$ with $AA^\dagger=\Id$,
and if $A$ has full column rank then it has a left inverse $A^\dagger=(A^TA)^{-1}A^T$ with $A^\dagger A=\Id.$
\section{Key Result: Linearly Independent Data Leads to Connected Sublevel Sets}
This section presents our key results for linearly independent data,
which form the basis for our additional results in the next sections where we analyze deep over-parameterized networks 
with arbitrary data.
Below we assume that the widths of all hidden layers are decreasing, i.e. $n_1>\ldots> n_L.$
Note that it is still possible to have $n_1\geq d$ or $n_1<d.$
The above condition is quite natural
as in practice (e.g., see Table 1 in \cite{Quynh2018b})
the first hidden layer often has the most number of neurons,
afterwards the number of neurons starts decreasing towards the output layer, 
which is helpful for the network to learn more compact representations at higher layers.
We introduce the following key property for a class of points $\theta=(W_l,b_l)_{l=1}^L$ in parameter space and refer to it later
in our theorems and proofs.
\begin{property}\label{P1}
    $W_l$ has full rank for every $l\in[2,L].$
\end{property}
Our main result in this section is stated as follows.
\begin{theorem}\label{thm:lin_data}
    Let Assumption \ref{ass:act1} hold, $\rank(X)=N$ and $n_1> \ldots> n_L$ where $L\geq 2.$
    Then the following hold:
    \begin{enumerate}
	\item Every sublevel set of $\Phi$ is connected.
	Moreover, $\Phi$ can attain any value arbitrarily close to $p^*.$
	\item Every non-empty connected component of every level set of $\Phi$ is unbounded.
    \end{enumerate}
\end{theorem}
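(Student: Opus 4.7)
The plan is to exploit two ingredients: the surjectivity of the first-layer realization map (a consequence of $\rank(X)=N$ and Assumption~\ref{ass:act1}) and the convexity of the sublevel sets of $\varphi$ in output space. Since $[X,\ones_N]\in\RR^{N\times(d+1)}$ has row-rank $N$ and $\sigma$ is a homeomorphism of $\RR$, the map
\[
\Psi:(W_1,b_1)\mapsto \sigma(XW_1+\ones_N b_1^T)=F_1
\]
is a continuous surjection $\Omega_1\to\RR^{N\times n_1}$ whose fibers $\Psi^{-1}(A)$ are non-empty affine subspaces of dimension $(d{+}1{-}N)n_1\geq n_1\geq 1$, in particular path-connected and unbounded. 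This reduces the whole analysis to the ``data-free'' loss
\[
\tilde\Phi\big(A,(W_l,b_l)_{l=2}^L\big):=\varphi(\tilde F_L),\qquad \tilde F_1:=A\in\RR^{N\times n_1},
\]
where $\tilde F_k$ ($k\geq 2$) obeys the recursion~\eqref{eq:F_k} starting from $A$. Any continuous path in a sublevel set of $\tilde\Phi$ can be lifted to $\Omega$ via the continuous selection $(W_1(t),b_1(t))=(X^\dagger\sigma^{-1}(A(t)),0)$, with intra-fiber linear interpolations at the two endpoints to match $(W_1^{(i)},b_1^{(i)})$; those intra-fiber moves keep $F_1$, hence $\Phi$, constant.

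For Part~1, I would then build a piecewise-linear path between any two points in a sublevel set of $\tilde\Phi$ by walking both endpoints toward a common canonical configuration satisfying Property~\ref{P1}, working from the output layer inward. The first leg interpolates only $(W_L,b_L)$ to some $(W_L^\star,b_L^\star)$ with $W_L^\star$ of full column rank; since $(W_L,b_L)\mapsto \tilde F_L$ is affine for fixed $\tilde F_{L-1}$ and $\{\varphi\leq\alpha\}$ is convex, the corresponding straight line in $\RR^{N\times m}$ stays in $\{\varphi\leq\alpha\}$ as soon as its endpoints do. Once layers $l{+}1,\dots,L$ are at their canonical full-rank values, Property~\ref{P1} plus the inequality $n_l>n_{l+1}$ turns the post-$l$ composition into a full-column-rank affine map on $\tilde F_l$, so the same convexity-plus-linear-interpolation argument applies at layer $l$. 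A final leg interpolates $A$ linearly from $A^{(0)}$ to $A^{(1)}$ at the shared canonical tail, again tracing a straight line in $\tilde F_L$. The approach-to-$p^*$ assertion follows from the same reduction: iterating the surjectivity of the (now-affine) layer maps for full-row-rank $A$ yields a dense image of $\tilde F_L$ in $\RR^{N\times m}$, and $\inf_G\varphi(G)=p^*$ by definition. The main obstacle is the careful choice of the canonical targets $(W_l^\star,b_l^\star)$: one must ensure that every leg of the path has both endpoints in $L_\alpha$, which is exactly where Property~\ref{P1} and the strictly decreasing widths $n_1>\dots>n_L$ enter, providing the pseudo-inverses that let the required straight lines in output space be realized by straight lines in one parameter block at a time.

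For Part~2, the unboundedness is already encoded in the first-layer fiber: for any $\theta\in\Phi^{-1}(\alpha)$, the slice $\Psi^{-1}(F_1(\theta))\times\{(W_l(\theta),b_l(\theta))\}_{l=2}^L$ lies in $\Phi^{-1}(\alpha)$, because moving within it leaves $F_1$ and all downstream $F_l$ unchanged. This slice is an affine subspace of $\Omega$ of dimension $(d{+}1{-}N)n_1\geq n_1\geq 1$, hence unbounded and path-connected, so the connected component of $\theta$ in $\Phi^{-1}(\alpha)$ contains an unbounded affine subspace and is itself unbounded.
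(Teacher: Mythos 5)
Your Part 2 argument is correct and in fact simpler than the paper's: since $\rank(X)=N$ forces $d\geq N$, the fiber $\Psi^{-1}(F_1(\theta))$ is an affine subspace of $\Omega_1$ of dimension $(d+1-N)n_1\geq n_1\geq 1$, so the slice obtained by freezing $(W_l,b_l)_{l=2}^L$ is a connected unbounded subset of $\Phi^{-1}(\Phi(\theta))$ containing $\theta$. The paper instead makes $W_2$ full rank, puts $W_1$ in the canonical form of Lemma~\ref{lem:canonical_form}, and scales $W_2\mapsto\lambda W_2$, $\lambda\to\infty$, while using $h$ to keep $F_2$ fixed; both arguments are valid. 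Your reduction from $\Phi$ to the ``data-free'' loss $\tilde\Phi$ via the lifting $(W_1(t),b_1(t))=(X^\dagger\sigma^{-1}(A(t)),0)$ is also correct and is essentially the $k=1$ case of the paper's map $h$.

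Part 1, however, has two genuine gaps. First, the claim that ``Property~\ref{P1} plus $n_l>n_{l+1}$ turns the post-$l$ composition into a full-column-rank \emph{affine} map on $\tilde F_l$'' is false: for $l<L$ the map $\tilde F_l\mapsto\tilde F_L$, and likewise $A\mapsto\tilde F_L$, contains the nonlinearity $\sigma$ at each intermediate layer, so it is injective under Property~\ref{P1} but not affine (already for Leaky-ReLU with two pieces). Consequently, linearly interpolating $(W_l,b_l)$ for $l<L$, or linearly interpolating $A$, does \emph{not} trace a straight line in $\tilde F_L$-space, and the convexity-of-$\{\varphi\leq\alpha\}$ argument only works for the single affine layer $(W_L,b_L)\mapsto\tilde F_L$. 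Second, even at layer $L$ you do not justify that a full-rank target $(W_L^\star,b_L^\star)$ \emph{inside} $L_\alpha$ exists, and for deeper layers the analogous claim is worse; you acknowledge this as ``the main obstacle'' but leave it unresolved. The paper sidesteps both issues by a different mechanism: Lemma~\ref{lem:any_to_fullrankW} makes $(W_l)_{l=2}^L$ full rank while keeping $F_L$ (hence the loss) \emph{exactly constant}, by zeroing out the relevant columns of the lower-layer feature map through a compensating move in $W_1$; and the map $h$ of Lemma~\ref{lem:canonical_form} inverts the whole chain from $(W_1,b_1)$ to $F_L$ (not just to $F_1$), so that one may interpolate the \emph{target output} $F_L$ linearly and realize it continuously in $W_1$, making the loss non-increasing by convexity of $\varphi$. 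These two ingredients — exact loss invariance during rank repair and a continuous right inverse of the full chain $W_1\mapsto F_L$ — are missing from your proposal and cannot be replaced by the piecewise-linear interpolations you describe.
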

We have the following decomposition of sublevel set:
$\Phi^{-1}((-\infty,\alpha]) = \Phi^{-1}(\alpha)\cup\Phi^{-1}((-\infty,\alpha)).$
It follows that if $\Phi$ has unbounded level sets then its sublevel sets must also be unbounded.
We note that the reverse is not true, e.g. the standard Gaussian distribution function has unbounded sublevel sets
but its level sets are bounded.
Given that, the two statements of Theorem \ref{thm:lin_data} together imply that 
every sublevel set of the loss must be both connected and unbounded.
While the connectedness property of sublevel sets implies that the loss function is rather well-behaved,
the unboundedness property of level sets intuitively implies that 
there are no bounded valleys in the loss surface,
regardless of whether these valleys contain a global minimum or not.
Clearly this also indicates that $\Phi$ has no strict local minima/maxima.
In the remainder of this section, we will present the proof of Theorem \ref{thm:lin_data}.
The following lemmas will be helpful.
\begin{lemma}\label{lem:canonical_form}
    Let the conditions of Theorem \ref{thm:lin_data} hold.
    Given some $k\in[2,L].$ 
    Then there is a continuous map $h:\Omega_2^*\times\ldots\times\Omega_k^*\times\RR^{N\times n_k}\to\Omega_1$
    which satisfy the following:
    \begin{enumerate}
	\item For every $\Big((W_2,b_2),\ldots,(W_k,b_k),A\Big)\in\Omega_2^*\times\ldots\times\Omega_k^*\times\RR^{N\times n_k}$
	it holds that $F_k\Big(h\Big((W_l,b_l)_{l=2}^k,A\Big),(W_l,b_l)_{l=2}^k\Big)=A.$
	\item For every $\theta=(W_l^*,b_l^*)_{l=1}^L$ where all the matrices $(W_l^*)_{l=2}^k$ have full rank,
	      there is a continuous curve from $\theta$ to 
	      $\Big(h\Big((W_l^*,b_l^*)_{l=2}^k, F_k(\theta)\Big),(W_l^*,b_l^*)_{l=2}^L\Big)$ 
	      on which the loss $\Phi$ is constant.
    \end{enumerate}
\end{lemma}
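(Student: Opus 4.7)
The plan is to build $h$ explicitly by iterated backward propagation, then construct the loss-constant curve in Part 2 by induction on $k$, at each step concatenating a reparameterization within the fiber of $(W_1,b_1)$ with a linear interpolation in a pre-activation affine subspace. The enabling identities are $W_j^\dagger W_j=\Id$ (valid since $n_{j-1}>n_j$ and $W_j$ has full rank) and $XX^\dagger=\Id$ (valid since $\rank(X)=N$).

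For Part 1, given $(W_l,b_l)_{l=2}^k$ and $A\in\RR^{N\times n_k}$, I set $F_k\bydef A$ and recursively
\begin{align*}
F_{j-1}\bydef(\sigma^{-1}(F_j)-\ones_N b_j^T)W_j^\dagger,\qquad j=k,k-1,\ldots,2,
\end{align*}
dropping $\sigma^{-1}$ when $j=L=k$ (since layer $L$ has no activation). I then set $W_1\bydef X^\dagger\sigma^{-1}(F_1)$, $b_1\bydef 0$, and declare $h((W_l,b_l)_{l=2}^k,A)\bydef(W_1,b_1)$. Continuity is immediate from Assumption \ref{ass:act1} and continuity of the pseudo-inverse on the full-rank locus; forward propagation using $XX^\dagger=\Id$ and $W_j^\dagger W_j=\Id$ unwinds the recursion to confirm $F_k=A$.

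For Part 2 I induct on $k$. Write $F_j^*\bydef F_j(\theta)$, $M_k\bydef\sigma^{-1}(F_k^*)-\ones_N(b_k^*)^T$ (replacing $\sigma^{-1}(F_k^*)$ by $F_k^*$ if $k=L$), and $\tilde F_{k-1}\bydef M_k(W_k^*)^\dagger$; note $F_{k-1}^*$ and $\tilde F_{k-1}$ both solve $F_{k-1}W_k^*=M_k$. For the base case $k=2$: (i) linearly interpolate $(W_1,b_1)$ from $(W_1^*,b_1^*)$ to $(X^\dagger\sigma^{-1}(F_1^*),0)$ within the affine fiber $\{(W_1,b_1):XW_1+\ones_N b_1^T=XW_1^*+\ones_N(b_1^*)^T\}$, keeping $F_1$ and $\Phi$ constant; then (ii) linearly interpolate $F_1(s)\bydef(1-s)F_1^*+s\tilde F_1$ and lift to $(X^\dagger\sigma^{-1}(F_1(s)),0)$. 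Convexity of $\{F_1:F_1W_2^*=M_2\}$ ensures $F_2=F_2^*$ throughout, and the $s=1$ endpoint is $h((W_2^*,b_2^*),F_2^*)$. For the inductive step $k\geq3$, I invoke the hypothesis at $k-1$ to get a $\Phi$-constant curve from $\theta$ to $\theta'\bydef(h((W_l^*,b_l^*)_{l=2}^{k-1},F_{k-1}^*),(W_l^*,b_l^*)_{l=2}^L)$, then concatenate the interpolation $F_{k-1}(s)\bydef(1-s)F_{k-1}^*+s\tilde F_{k-1}$, backward-propagate each $F_{k-1}(s)$ through layers $k-1,\ldots,2$ via the Part-1 formulas to obtain $F_1(s)$, and lift to $(X^\dagger\sigma^{-1}(F_1(s)),0)$. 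Forward propagation reproduces $F_{k-1}(s)$ (by $W_j^\dagger W_j=\Id$), so $F_k=F_k^*$ and $\Phi$ stay constant, and the $s=1$ endpoint equals $h((W_l^*,b_l^*)_{l=2}^k,F_k^*)$.

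The subtlety I expect to be most vigilant about is the gluing in the inductive step: the $s=0$ lift of the second sub-path---obtained by backward-propagating $F_{k-1}^*$ through layers $k-1,\ldots,2$---must coincide with $\theta'$'s first-layer parameters $h((W_l^*,b_l^*)_{l=2}^{k-1},F_{k-1}^*)$. This holds tautologically from the definition of $h$ at level $k-1$, but it is the one place where the bookkeeping must be right. Continuity of each piece, preservation of $\Phi$, and matching of all other endpoints then follow routinely from Assumption \ref{ass:act1} and the pseudo-inverse identities.
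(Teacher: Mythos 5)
Your proof is correct, and Part 1 is essentially identical to the paper's (modulo the minor cosmetic choice of setting $b_1=0$ and using $X^\dagger$ rather than $[X,\ones_N]^\dagger$; either gives a valid $h$ since the lemma only asserts existence). Part 2, however, takes a genuinely different route. The paper does not construct the loss-constant curve explicitly: it fixes $(W_l^*,b_l^*)_{l=2}^L$, views the layer-$k$ output as a map $f$ of $(W_1,b_1)$ alone, and proves that the whole fiber $f^{-1}(A)$ is connected by iterating the facts that the preimage under one layer is a Minkowski sum of a translated coset with a kernel, and that Minkowski sums and continuous images preserve connectedness; since both $U^*$ and $U^h$ lie in this connected fiber, a path exists. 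You instead construct the path directly by induction on $k$: linearly interpolate the pre-activation feature $F_{k-1}$ inside the affine solution set of $F_{k-1}W_k^*=M_k$, backward-propagate each intermediate $F_{k-1}(s)$ to layer one, and glue onto the level-$(k-1)$ path supplied by the inductive hypothesis. Both arguments are sound and rest on the same enabling identities ($W_j^\dagger W_j=\Id$, $XX^\dagger=\Id$, continuity of $\sigma^{-1}$). The paper's version is shorter and avoids the endpoint-matching bookkeeping you flag as the delicate spot; yours is more concrete in that it exhibits the curve rather than invoking connectedness of the fiber as a set, and it sidesteps Propositions \ref{prop:connected_continuous_map} and \ref{prop:minkowski} entirely.
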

\begin{proof}
    For every $\Big((W_2,b_2),\ldots,(W_k,b_k),A\Big)\in\Omega_2^*\times\ldots\times\Omega_k^*\times\RR^{N\times n_k},$
    let us define the value of the map $h$ as
    \begin{align*}
	h\Big((W_l,b_l)_{l=2}^k,A\Big) = (W_1,b_1), 
    \end{align*}
    where $(W_1,b_1)$ is given by the following recursive formula
    \begin{align*}
	\begin{cases}
	    \begin{bmatrix}W_1\\b_1^T\end{bmatrix}=[X,\ones_N]^\dagger\sigma^{-1}(B_1),\\
	    B_l = \Big(\sigma^{-1}(B_{l+1})-\ones_Nb_{l+1}^T\Big)\, W_{l+1}^\dagger,\,\forall\,l\in[1,k-2],\\
	    B_{k-1} = \begin{cases}
			  (A-\ones_Nb_L^T)\,W_L^\dagger & k=L \\
			  \Big(\sigma^{-1}(A)-\ones_Nb_{k}^T\Big)\, W_k^\dagger & k\in[2,L-1] 
		      \end{cases}
	\end{cases}.
    \end{align*}
    By our assumption $n_1>\ldots>n_L,$
    it follows from the domain of $h$ that
    all the matrices $(W_l)_{l=2}^k$ have full column rank, and so they have a left inverse.
    Similarly, $[X,\ones_N]$ has full row rank due to our assumption that $\rank(X)=N$, and so it has a right inverse. 
    Moreover $\sigma$ has a continuous inverse by Assumption \ref{ass:act1}.
    Thus $h$ is a continuous map as it is a composition of continuous functions.
    In the following, we prove that $h$ satisfies the two statements of the lemma.
    
    \textit{1.} 
    Let $\Big((W_2,b_2),\ldots,(W_k,b_k),A\Big)\in\Omega_2^*\times\ldots\times\Omega_k^*\times\RR^{N\times n_k}.$
    Since all the matrices $(W_l)_{l=2}^k$ have full column rank and $[X,\ones_N]$ has full row rank,
    it holds that $W_l^\dagger W_l=\Id$ and $[X,\ones_N][X,\ones_N]^\dagger=\Id$
    and thus we easily obtain from the above definition of $h$ that
    \begin{align*}
	\begin{cases}
	    B_1 = \sigma\Big([X,\ones_N]\begin{bmatrix}W_1\\b_1^T\end{bmatrix}\Big),\\
	    B_{l+1} = \sigma(B_l W_{l+1}+\ones_Nb_{l+1}^T),\,\forall\,l\in[1,k-2],\\
	    A = \begin{cases}B_{L-1}W_L+\ones_Nb_L^T & k=L, \\ \sigma(B_{k-1}W_k+\ones_Nb_k^T) & k\in[2,L-1].\end{cases}
	\end{cases}
    \end{align*}
    One can easily check that the above formula of $A$ is exactly the definition of $F_k$ from \eqref{eq:F_k}
    and thus it holds $F_k\Big(h\Big((W_l,b_l)_{l=2}^k,A\Big),(W_l,b_l)_{l=2}^k\Big)=A$
    for every $\Big((W_2,b_2),\ldots,(W_k,b_k),A\Big)\in\Omega_2^*\times\ldots\times\Omega_k^*\times\RR^{N\times n_k}.$ 
    
    \textit{2.}
    Let $G_l:\RR^{N\times n_{l-1}}\to\RR^{N\times n_l}$ be defined as
    \begin{align*}
	G_l(Z) = \begin{cases}ZW_L^*+\ones_N(b_L^*)^T & l=L\\ \sigma\Big(ZW_l^*+\ones_N(b_l^*)^T\Big) & l\in[2,L-1].\end{cases}
    \end{align*}
    For convenience, let us group the parameters of the first layer into a matrix, 
    say $U=[W_1^T,b_1]^T\in\RR^{(d+1)\times n_1}.$
    Similarly, let $U^*=[(W_1^*)^T,b_1^*]^T\in\RR^{(d+1)\times n_1}.$
    Let $f:\RR^{(d+1)\times n_1}\to\RR^{N\times n_k}$ be a function of $(W_1,b_1)$ defined as
    \begin{align*}
	&f(U) = G_k\circ G_{k-1}\ldots G_2 \circ G_1(U), \textrm{ where} \\
	&G_1(U)=\sigma([X,\ones_N]U),\; U=[W_1^T,b_1]^T.
    \end{align*}
    We note that this definition of $f$ is exactly $F_k$ from \eqref{eq:F_k},
    but here we want to exploit the fact that $f$ is a function of $(W_1,b_1)$
    as all other parameters are fixed to the corresponding values of $\theta.$
    Let $A=F_k(\theta).$
    By definition we have $f(U^*)=A$ and thus $U^*\in f^{-1}(A).$
    Let us denote 
    \begin{align*}
	(W_1^h,b_1^h)=h\Big(W_l^*,b_l^*)_{l=2}^k,A\Big),\quad U^h=[(W_1^h)^T,b_1^h]^T.
    \end{align*}
    By applying the first statement of the lemma to $\Big((W_2^*,b_2^*),\ldots,(W_k^*,b_k^*),A\Big)$
    we have
    \begin{align*}
	A = F_k\Big((W_1^h,b_1^h),(W_l^*,b_l^*)_{l=2}^k\Big)= f(U^h)
    \end{align*}
    which implies $U^h\in f^{-1}(A).$
    So far both $U^*$ and $U^h$ belong to $f^{-1}(A).$
    The idea now is that if one can show that $f^{-1}(A)$ is a connected set then there would exist a connected path between $U^*$ and $U^h$ 
    (and thus a path between $(W_1^*,b_1^*)$ and $(W_1^h,b_1^h)$)
    on which the output at layer $k$ is identical to $A$ and hence the loss is invariant, 
    which concludes the proof.
    
    In the following, we show that $f^{-1}(A)$ is indeed connected.
    First, one observes that $\range(G_l)=\RR^{N\times n_l}$ for every $l\in[2,k]$ since
    all the matrices $(W_l^*)_{l=2}^k$ have full column rank
    and $\sigma(\RR)=\RR$ due to Assumption \ref{ass:act1}.
    Similarly, it follows from our assumption $\rank(X)=N$ that $\range(G_1)=\RR^{N\times n_1}.$
    By standard rules of compositions, we have
    \begin{align*}
	f^{-1}(A) 
	&=G_1^{-1}\circ G_2^{-1}\circ\ldots\circ G_k^{-1} (A).
    \end{align*}
    where all the inverse maps $G_l^{-1}$ have full domain.
    It holds
    \begin{align*}
	&G_k^{-1}(A)=\\
	&\begin{cases}(A-\ones_Nb_L^T)(W_L^*)^\dagger+\Setbar{B}{BW_L^*=0} & k=L\\ \Big(\sigma^{-1}(A)-\ones_Nb_k^*\Big)(W_k^*)^\dagger + \Setbar{B}{BW_k^*=0} & \textrm{else}\end{cases}
    \end{align*}
    which is a connected set in each case because of the following reasons: 
    1) the kernel of any matrix is connected, 
    2) the Minkowski-sum of two connected sets is connected by Proposition \ref{prop:minkowski}, 
    and 3) the image of a connected set under a continuous map is connected by Proposition \ref{prop:connected_continuous_map}.
    By repeating the similar argument for $k-1,\ldots,2$ 
    we conclude that $V\bydef G_2^{-1}\circ\ldots\circ G_k^{-1} (A)$ is connected.
    Lastly, we have
    \begin{align*}
	G_1^{-1}(V)=[X,\ones_N]^\dagger\sigma^{-1}(V) + \Setbar{B}{[X,\ones_N]B=0}
    \end{align*}
    which is also connected by the same arguments above.
    Thus $f^{-1}(A)$ is a connected set. 
    
    Overall, we have shown in this proof that the set of $(W_1,b_1)$ which realizes the same output at layer $k$ (given the parameters of other layers in between are fixed)
    is a connected set.
    Since both $(W_1^*,b_1^*)$ and $h\Big((W_l^*,b_l^*)_{l=2}^k, F_k(\theta)\Big)$ 
    belong to this solution set, there must exist a continuous path between them 
    on which the loss $\Phi$ is constant.
\end{proof}
The next lemma shows how to make the weight matrices full rank.
Its proof can be found in the appendix.
\begin{lemma}\label{lem:any_to_fullrankW}
    Let the conditions of Theorem \ref{thm:lin_data} hold.
    Let $\theta=(W_l,b_l)_{l=1}^L$ be any point in parameter space. 
    Then there is a continuous curve which starts from $\theta$ and ends at some $\theta'=(W_l',b_l')_{l=1}^L$ 
    so that $\theta'$ satisfies Property \ref{P1} and the loss $\Phi$ is constant on the curve.
\end{lemma}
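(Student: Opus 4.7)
The plan is to construct the loss-constant curve by modifying the deeper weight matrices one layer at a time, exploiting the freedom at layer $1$ supplied by the hypotheses $\rank(X)=N$ and $\sigma(\RR)=\RR$. These jointly ensure that $(W_1,b_1)\mapsto F_1$ is surjective onto $\RR^{N\times n_1}$, so $(W_1,b_1)$ can always be used to compensate for downstream changes. I would proceed in $L-1$ stages. Stage $l$ (for $l=2,\dots,L$) makes $W_l$ of full column rank while preserving the full-column-rank property established for $W_2,\dots,W_{l-1}$ in prior stages; only $(W_1,b_1)$ and $(W_l,b_l)$ are modified in Stage $l$, so previously corrected layers are untouched.

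In Stage $l$, I would linearly interpolate $W_l(t)=(1-t)W_l+tW_l^*$ to a full-rank target $W_l^*$ chosen generically enough that $W_l(t)$ has full column rank for all $t\in(0,1]$ (a polynomial-zero-set argument in $t$). The loss is held constant by enforcing $F_l(t)=F_l(\theta)$, which suffices since layers $l+1,\dots,L$ are untouched. Concretely, for $t>0$ I invert the layer-$l$ equation $F_{l-1}(t)W_l(t)+\ones_N b_l^T=\sigma^{-1}(F_l(\theta))$ via $W_l(t)^\dagger$ plus a free left-kernel component, and then realize the target $F_{l-1}(t)$ through $(W_1(t),b_1(t))$ using the explicit backward formula of Lemma \ref{lem:canonical_form} applied to layers $2,\dots,l-1$ (well-defined because those weights are full rank by the inductive hypothesis). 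The main obstacle is continuity at the junction $t=0$: since $W_l$ itself may be rank-deficient, the Moore-Penrose pseudoinverse $W_l(t)^\dagger$ is discontinuous at $t=0$, so the natural formula for $F_{l-1}(t)$ generally fails to converge to the actual value of $F_{l-1}$ at $\theta$.

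To repair the junction, I would prepend to each Stage $l$ a preparatory sub-path that keeps every weight fixed except $(W_1,b_1)$ and moves $(W_1,b_1)$ to an auxiliary $(\hat W_1,\hat b_1)$ chosen so that the corresponding $\hat F_{l-1}$ matches the $t\to 0^+$ limit of the pseudoinverse formula along the forthcoming Stage-$l$ segment. The existence of this sub-path reduces to showing that the preimage of $A=F_L(\theta)$ under $(W_1,b_1)\mapsto F_L$ (with deeper layers held fixed) is path-connected, which I would verify by the same style of argument used in the proof of Lemma \ref{lem:canonical_form}: the preimage decomposes as iterated Minkowski sums of kernels of linear maps together with images under $\sigma$ and $\sigma^{-1}$ (the latter continuous thanks to Assumption \ref{ass:act1}), and connectedness is preserved throughout by Propositions \ref{prop:connected_continuous_map} and \ref{prop:minkowski}. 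The most delicate technical step is arranging $(\hat W_1,\hat b_1)$ so that the concatenated curve is genuinely continuous across $t=0$; this amounts to a careful choice of the kernel components in the $B_l$-decomposition, which is possible because each kernel is a non-trivial linear subspace (widths are strictly decreasing). Iterating Stages $l=2,\dots,L$, the concatenated curve is continuous, keeps $\Phi$ equal to $\Phi(\theta)$ throughout, and terminates at a $\theta'$ satisfying Property \ref{P1}.
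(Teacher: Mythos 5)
Your overall strategy---make the matrices full rank one layer at a time while compensating through $(W_1,b_1)$---agrees with the paper at the top level, but the mechanics of each stage differ in a way that opens a real gap. The paper never interpolates $W_l$ linearly to a full-rank target. Instead, it keeps $W_l$ fixed throughout the compensation step: it first moves $(W_1,b_1)$ (via the $h$-map of Lemma~\ref{lem:canonical_form}) along a curve that transports $F_{k}$ to $[F_k(:,\mathcal{I})+F_k(:,\bar{\mathcal{I}})E,\mathbf{0}]P$, so that $F_kW_{k+1}$ stays exactly equal to its original value for the whole curve while the columns of $F_k$ multiplying the linearly dependent rows of $W_{k+1}$ are driven to zero. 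Only \emph{after} that does it change $W_{k+1}(\bar{\mathcal{I}},:)$, which at that point has no effect on the output. This completely sidesteps any pseudoinverse of a rank-changing matrix.

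Your version hinges on inverting the layer-$l$ equation through $W_l(t)^\dagger$ for $t\in(0,1]$ and then patching the junction at $t=0$, and you correctly identify that $W_l(t)^\dagger$ is discontinuous at $t=0$. But the repair you propose does not close the gap. The solution set at time $t>0$ is the affine space $S_t=\{F: FW_l(t)=C\}$ with direction space $\ker_L(W_l(t))$; the ``free left-kernel component'' $K(t)$ can only move you \emph{within} $S_t$, so it cannot absorb a possible divergence of the particular solution $CW_l(t)^\dagger$ (which is already in $S_t$). For your construction to give a continuous curve you must prove that $\lim_{t\to 0^+}CW_l(t)^\dagger$ exists and lies in $S_0$. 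This is plausible for a generic full-rank target $W_l^*$ (the small singular values of $W_l(t)$ decay linearly in $t$, the singular subspaces converge in the Grassmannian, and then the $1/\sigma_i(t)$ blow-up is canceled because $C=F_{l-1}(\theta)W_l(0)$ annihilates the kernel directions of $W_l(0)$ in the limit), but this is a nontrivial spectral-perturbation argument that you have not carried out. The justification you give instead---``which is possible because each kernel is a non-trivial linear subspace''---is a statement about the direction space, not about the particular-solution term, and does not address the convergence question. So as written there is a genuine hole precisely where you flagged the ``most delicate technical step''; the paper's device of zeroing out the relevant columns of $F_{k}$ before touching $W_{k+1}$ is what makes that step unnecessary.
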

\ifpaper
\begin{proof}
    The idea is to make one weight matrix full rank at a time while keeping the others fixed (except the first layer).
    Each step is done by following a continuous path which leads to a new point where the rank condition is fulfilled
    while keeping the loss constant along the path.
    Each time when we follow a continuous path, we reset our starting point to the end point of the path and proceed.
    This is repeated until all the matrices $(W_l)_{l=2}^L$ have full rank.
    
    \underline{Step 1: Make $W_2$ full rank.}
    If $W_2$ has full rank then we proceed to $W_3.$
    Otherwise, let $\rank(W_2)=r<n_2<n_1.$
    Let $\mathcal{I}\subset\Set{1,\ldots,n_1}, |\mathcal{I}|=r$ 
    denote the set of indices of linearly independent rows of $W_2$ so that $\rank(W_2(\mathcal{I},:))=r.$ 
    Let $\bar{\mathcal{I}}$ denote the remaining rows of $W_2.$
    Let $E\in\RR^{(n_1-r)\times r}$ be a matrix such that $W_2(\bar{\mathcal{I}},:)=E W_2(\mathcal{I},:).$
    Let $P\in\RR^{n_1\times n_1}$ be a permutation matrix which permutes the rows of $W_2$ according to $\mathcal{I}$
    so that we can write
    \begin{align*}
	PW_2 = \begin{bmatrix}W_2(\mathcal{I},:)\\W_2(\bar{\mathcal{I}},:)\end{bmatrix}.
    \end{align*}
    We recall that $F_1(\theta)$ is the output of the network at the first layer, evaluated at $\theta$.
    Below we drop $\theta$ and just write $F_1$ as it is clear from the context.
    By construction of $P$, we have 
    $$F_1P^T=[F_1(:,\mathcal{I}),F_1(:,\bar{\mathcal{I}})].$$    
    The first step is to turn $W_1$ into a canonical form.
    In particular, the set of all possible solutions of $W_1$ which realizes the same the output $F_1$ at the first hidden layer 
    is characterized by $X^\dagger\big(\sigma^{-1}(F_1)-\ones_N b_1^T\big) + \ker(X)$
    where we denote, by abuse of notation, $\ker(X)=\Setbar{A\in\RR^{d\times n_1}}{XA=0}.$
    This solution set is connected because $\ker(X)$ is a connected set 
    and the Minkowski-sum of two connected sets is known to be connected,
    and so there exists a continuous path between every two solutions in this set on which the output $F_1$ is invariant.
    Obviously the current $W_1$ and $X^\dagger(\sigma^{-1}(F_1)-\ones_N b_1^T)$ are elements of this set,
    thus they must be connected by a continuous path on which the loss is invariant.
    So we can assume now that $W_1=X^\dagger(\sigma^{-1}(F_1)-\ones_N b_1^T).$ 
    Next, consider the curve:
    \begin{align*}	
	&W_1(\lambda)=X^\dagger\Big(\sigma^{-1}(A(\lambda))-\ones_Nb_1^T\Big), \textrm{where} \\
	&A(\lambda)=[F_1(:,\mathcal{I})+\lambda F_1(:,\bar{\mathcal{I}})E,(1-\lambda)F_1(:,\bar{\mathcal{I}})]\,P.
    \end{align*}
    This curves starts at $\theta$ since $W_1(0)=W_1$, 
    and it is continuous as $\sigma$ has a continuous inverse by Assumption \ref{ass:act1}.
    Using $XX^\dagger=\Id$, one can compute the pre-activation output (without bias term) at the second layer as
    \begin{align*}
	\sigma\big(XW_1(\lambda)+\ones_Nb_1^T\big)\, W_2 = A(\lambda)\, W_2 = F_1 W_2,
    \end{align*}
    which implies that the loss is invariant on this curve, 
    and so we can take its end point $W_1(1)$ as a new starting point:
    \begin{align*}
	&W_1=X^\dagger\Big(\sigma^{-1}(A)-\ones_Nb_1^T\Big), \textrm{where} \\
	&A=[F_1(:,\mathcal{I})+F_1(:,\bar{\mathcal{I}})E,\,\textbf{0}]\,P.
    \end{align*}
    Now, the output at second layer above, given by $AW_2$, is independent of $W_2(\bar{\mathcal{I}},:)$ because it is canceled
    by the zero component in $A.$ 
    Thus one can easily change $W_2(\bar{\mathcal{I}},:)$ so that $W_2$ has full rank while still keeping the loss invariant.
    
    \underline{Step 2: Using induction to make $W_3,\ldots,W_L$ full rank.}
    Let $\theta=(W_l,b_l)_{l=2}^L$ be our current point.
    Suppose that all the matrices $(W_l)_{l=2}^k$ already have full rank for some $k\geq 2$
    then we show below how to make $W_{k+1}$ full rank.
    We write $F_k$ to denote $F_k(\theta).$
    By the second statement of Lemma \ref{lem:canonical_form}, 
    we can follow a continuous path (with invariant loss) to drive $\theta$ to the following point:
    \begin{align}\label{eq:lem:any_to_fullrankW:theta}
	\theta\bydef\Big(h\Big((W_l,b_l)_{l=2}^k, F_k\Big),(W_l,b_l)_{l=2}^L\Big)
    \end{align}
    where $h:\Omega_2^*\times\ldots\times\Omega_k^*\times\RR^{N\times n_k}$ is the continuous map from Lemma \ref{lem:canonical_form} 
    which satisfies for every $A\in\RR^{N\times n_k},$
    \begin{align}\label{eq:lem:any_to_fullrankW:prop_h}
	F_k\Big(h\big((W_l,b_l)_{l=2}^k,A\big),(W_l,b_l)_{l=2}^k\Big)=A.
    \end{align}

    Now, if $W_{k+1}$ already has full rank then we are done, otherwise we follow the similar steps as before.
    Indeed, let $r=\rank(W_{k+1})<n_{k+1}<n_k$ and $\mathcal{I}\subset\Set{1,\ldots,n_k},|\mathcal{I}|=r$ the set of indicies of 
    $r$ linearly independent rows of $W_{k+1}$.
    Then there is a permutation matrix $P\in\RR^{n_k\times n_k}$ and some matrix $E\in\RR^{(n_k-r)\times r}$ so that
    \begin{align}\label{eq:lem:any_to_fullrankW:PW_k+1}
	\!\!\!\!PW_{k+1}\!=\!\begin{bmatrix}W_{k+1}(\mathcal{I},:)\\W_{k+1}(\bar{\mathcal{I}},:)\end{bmatrix},
	W_{k+1}(\bar{\mathcal{I}},:)\!=\!EW_{k+1}(\mathcal{I},:).
    \end{align}
    Moreover it holds 
    \begin{align}\label{eq:lem:any_to_fullrankW:FkPT}
	F_kP^T=[F_k(:,\mathcal{I}),F_k(:,\bar{\mathcal{I}})].
    \end{align}
    Consider the following curve $c:[0,1]\to\Omega$ which continuously update $(W_1,b_1)$ while keeping other layers fixed:
    \begin{align*}
	&c(\lambda) = \Big( h\Big((W_l,b_l)_{l=2}^k, A(\lambda)\Big),(W_2,b_2),\ldots,(W_L,b_L) \Big),\\
	&\textrm{where } A(\lambda)=[F_k(:,\mathcal{I})+\lambda F_k(:,\bar{\mathcal{I}})E,(1-\lambda)F_k(:,\bar{\mathcal{I}})]\,P.
    \end{align*}
    It is clear that $c$ is continuous as $h$ is continuous.
    One can easily verify that $c(0)=\theta$ by using \eqref{eq:lem:any_to_fullrankW:FkPT} and \eqref{eq:lem:any_to_fullrankW:theta}.
    The pre-activation output (without bias term) at layer $k+1$ for every point on this curve is given by
    \begin{align*}
	F_k(c(\lambda))\,W_{k+1} = A(\lambda) W_{k+1} = F_k W_{k+1},\,\;\forall\,\lambda\in[0,1],
    \end{align*}
    where the first equality follows from \eqref{eq:lem:any_to_fullrankW:prop_h} 
    and the second follows from \eqref{eq:lem:any_to_fullrankW:PW_k+1} and \eqref{eq:lem:any_to_fullrankW:FkPT}.
    As the loss is invariant on this curve, we can take its end point $c(1)$ as a new starting point:
    \begin{align*}
	&\theta \bydef \Big( h\Big((W_l,b_l)_{l=2}^k, A\Big),(W_2,b_2),\ldots,(W_L,b_L) \Big),\\
	&\textrm{where } A=[F_k(:,\mathcal{I})+F_k(:,\bar{\mathcal{I}})E,\,\textbf{0}]\,P.
    \end{align*}
    At this point, the output at layer $k+1$ as mentioned above is given by $AW_{k+1},$ which is independent of $W_{k+1}(\bar{\mathcal{I}},:)$
    since it is canceled out by the zero component in $A,$
    and thus one can easily change the submatrix $W_{k+1}(\bar{\mathcal{I}},:)$ so that $W_{k+1}$ has full rank while
    leaving the loss invariant.
    
    Overall, by induction we can make all the weight matrices $W_2,\ldots,W_L$ full rank 
    by following several continuous paths on which the loss is constant, which finishes the proof.
\end{proof}
\fi
\begin{proposition}\cite{Evard1994}\label{prop:connected_full_rank_matrices}
    The set of full rank matrices $A\in\RR^{m\times n}$ is connected for $m\neq n$.
\end{proposition}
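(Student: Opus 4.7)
The plan is to reduce the statement to the connectedness of a space of orthonormal frames, and then use a transitive action of a connected Lie group.

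First, I would reduce to the case $m<n$: the transposition $A\mapsto A^T$ is a homeomorphism $\RR^{m\times n}\to\RR^{n\times m}$ that preserves rank, so without loss of generality I may assume $m<n$, i.e., ``full rank'' means ``full row rank''.

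Next, I would use the QR factorization to continuously deform an arbitrary full-row-rank $A$ onto an orthonormal frame. Since $A^T$ has full column rank, it admits a unique QR decomposition $A^T=QR$ with $Q\in\RR^{n\times m}$ having orthonormal columns and $R\in\RR^{m\times m}$ upper triangular with strictly positive diagonal entries; the factors depend continuously on $A$. Writing $A=R^T Q^T$, I then deform $R^T$ to $\Id$ through the straight-line path $t\mapsto (1-t)R^T+t\Id$, which stays lower triangular with strictly positive diagonal (hence invertible) for all $t\in[0,1]$. Multiplying this path on the right by the fixed $Q^T$ yields a continuous path of full-row-rank matrices from $A$ to $Q^T$. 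Thus it suffices to prove that the set $\Sigma\subset\RR^{m\times n}$ of matrices whose rows form an orthonormal $m$-tuple in $\RR^n$ is path-connected.

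Finally, $SO(n)$ acts continuously on $\Sigma$ by right multiplication $Q^T\mapsto Q^T R$ with $R\in SO(n)$, and for $m<n$ this action is transitive: given $Q_1^T,Q_2^T\in\Sigma$, I would extend each of their rows to an orthonormal basis of $\RR^n$ by picking an orthonormal basis of the orthogonal complement (of dimension $n-m\geq 1$), and then flip the sign of one completing vector if necessary so that the change-of-basis matrix has determinant $+1$. Because $SO(n)$ is path-connected, any path from $\Id$ to this change-of-basis element maps to a path in $\Sigma$ from $Q_1^T$ to $Q_2^T$. Combined with the QR step this shows that the set of full-rank matrices is path-connected whenever $m<n$, and hence also for $m>n$ by the transposition reduction.

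The main obstacle, and the reason the hypothesis $m\neq n$ is needed, is the sign-adjustment in the last step: the extra direction $n-m\geq 1$ is exactly what allows me to upgrade $O(n)$-transitivity to $SO(n)$-transitivity. If $m=n$, this room disappears, $\Sigma=O(n)$ has two components, and the full-rank set $GL_n(\RR)$ correspondingly splits into the two components $\{\det>0\}$ and $\{\det<0\}$, confirming that the hypothesis is essential. The continuity of the positive-diagonal QR factorization used above is a standard fact that I would simply cite.
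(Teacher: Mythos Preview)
Your argument is correct. The reduction to $m<n$ by transposition, the QR-based retraction onto the Stiefel manifold via the convex homotopy $(1-t)R^T+t\Id$ (which remains lower triangular with positive diagonal, hence invertible), and the transitivity of the $SO(n)$-action on orthonormal $m$-frames when $n-m\geq 1$ are all sound; the sign-flip on a completing vector is exactly the place where $m\neq n$ is used, and your remark that the argument collapses for $m=n$ because $O(n)$ has two components is the right sanity check.

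As for comparison: the paper does not actually prove this proposition. It is stated with a citation to \cite{Evard1994} and used as a black box in the proof of Theorem~\ref{thm:lin_data}. Your proof is therefore strictly more than what the paper provides, and it is a clean, self-contained route. An alternative elementary argument one sometimes sees (and closer in spirit to how Evard--Uhlig proceed) avoids the Stiefel manifold entirely: given two full-row-rank $A,B\in\RR^{m\times n}$ with $m<n$, one connects them through a sequence of full-rank matrices by working in $GL_n(\RR)$ and exploiting that any $P\in GL_n(\RR)$ can be joined to a matrix of determinant of either sign by an elementary rotation in the $(n{-}m)$-dimensional ``extra'' directions. Your QR/Lie-group approach and that more hands-on approach ultimately hinge on the same mechanism (the spare dimension lets one correct orientation), but yours packages it more conceptually via the transitive $SO(n)$-action, at the cost of invoking continuity of the thin QR factorization, which, as you note, is standard.
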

\subsection{Proof of Theorem \ref{thm:lin_data}}
\begin{proof1}\textit{1.} 
    Let $L_\alpha$ be some sublevel set of $\Phi$.
    Let $\theta=(W_l,b_l)_{l=1}^L$ and $\theta'=(W_l',b_l')_{l=1}^L$ be arbitrary points in $L_\alpha$.
    Let $F_L=F_L(\theta)$ and $F_L'=F_L(\theta').$
    These two quantities are computed in the beginning and will never change during this proof.
    But when we write $F_L(\theta'')$ for some $\theta''$ we mean the network output evaluated at $\theta''.$
    The main idea is to construct two different continuous paths which simultaneously start from $\theta$ and $\theta'$
    and are entirely contained in $L_\alpha$ (this is done by making the loss on each individual path non-increasing),
    and then show that they meet at a common point in $L_\alpha$, which then implies that $L_\alpha$ is a connected set.
    
    First of all, we can assume that both $\theta$ and $\theta'$ satisfy Property \ref{P1}, because otherwise
    by Lemma \ref{lem:any_to_fullrankW} one can follow a continuous path from each point
    to arrive at some other point where this property holds and the loss on each path is invariant, 
    meaning that we still stay inside $L_\alpha.$
    As $\theta$ and $\theta'$ satisfy Property \ref{P1}, 
    all the weight matrices $(W_l)_{l=2}^L$ and $(W_l')_{l=2}^L$ have full rank,
    and thus by applying the second statement of Lemma \ref{lem:canonical_form} with $k=L$ and using the similar argument above, 
    we can simultaneously drive $\theta$ and $\theta'$ to the following points,
    \begin{align}\label{eq:thm:lin_data:theta'}
	&\!\!\theta=\Big(h\Big((W_l,b_l)_{l=2}^L, F_L\Big),(W_2,b_2),\ldots,(W_L,b_L)\Big),\nonumber\\
	&\!\!\theta'=\Big(h\Big((W_l',b_l')_{l=2}^L, F_L'\Big),(W_2',b_2'),\ldots,(W_L',b_L')\Big)
    \end{align}
    where $h:\Omega_2^*\times\ldots\times\Omega_L^*\times\RR^{N\times m}\to\Omega_1$ is the continuous map from Lemma \ref{lem:canonical_form}
    which satisfies 
    \begin{align}\label{eq:thm:lin_data:h}
	&\!F_L\Big(h\Big((\hat{W}_l,\hat{b}_l)_{l=2}^L,A\Big),(\hat{W}_l,\hat{b}_l)_{l=2}^L\Big)=A,\textrm{ for every}\\
	&\!\Big((\hat{W}_l,\hat{b}_l),\ldots,(\hat{W}_L,\hat{b}_L),A\Big)\in\Omega_2^*\times\ldots\times\Omega_L^*\times\RR^{N\times n_k}.\nonumber
    \end{align}
    Next, we construct a continuous path starting from $\theta$ on which the loss is constant
    and it holds at the end point of the path that all parameters from layer $2$ till layer $L$ are equal to the corresponding parameters of $\theta'.$
    Indeed, by applying Proposition \ref{prop:connected_full_rank_matrices} to the pairs of full rank matrices $(W_l,W_l')$ for every $l\in[2,L]$,
    we obtain continuous curves $W_2(\lambda),\ldots,W_L(\lambda)$
    so that $W_l(0)=W_l,W_l(1)=W_l'$ and $W_l(\lambda)$ has full rank for every $\lambda\in[0,1].$
    For every $l\in[2,L],$ let $c_l:[0,1]\to\Omega_l^*$ be the curve of layer $l$ defined as
    $$c_l\big(\lambda)=\Big(W_l(\lambda), (1-\lambda)b_l+\lambda b_l'\Big).$$
    We consider the curve $c:[0,1]\to\Omega$ given by
    \begin{align*}
	c(\lambda)=\Big( h\Big((c_l(\lambda))_{l=2}^L, F_L\Big), c_2(\lambda),\ldots,c_L(\lambda) \Big).
    \end{align*}
    Then one can easily check that $c(0)=\theta$ and $c$ is continuous as all the functions $h,c_2,\ldots,c_l$ are continuous. 
    Moreover, we have $\Big(c_2(\lambda),\ldots,c_L(\lambda)\Big)\in\Omega_2^*\times\ldots\times\Omega_L^*$
    and thus it follows from \eqref{eq:thm:lin_data:h} that $F_L(c(\lambda))=F_L$
    for every $\lambda\in[0,1],$ which leaves the loss invariant on $c.$
    
    Since the curve $c$ above starts at $\theta$ and has constant loss,
    we can reset $\theta$ to the end point of this curve, by setting $\theta=c(1)$,
    while keeping $\theta'$ from \eqref{eq:thm:lin_data:theta'}, which together give us
    \begin{align*}
	&\theta=\Big(h\Big((W_l',b_l')_{l=2}^L, F_L\Big),(W_2',b_2'),\ldots,(W_L',b_L')\Big),\\
	&\theta'=\Big(h\Big((W_l',b_l')_{l=2}^L, F_L'\Big),(W_2',b_2'),\ldots,(W_L',b_L')\Big).
    \end{align*}
    Now we note that the parameters of $\theta$ and $\theta'$ coincide at all layers except at the first layer.
    We will construct two continuous paths inside $L_\alpha$, say $c_1(\cdot)$ and $c_2(\cdot)$, 
    which starts from $\theta$ and $\theta'$ respectively ,
    and show that they meet at a common point in $L_\alpha.$
    Let $\hat{Y}\in\RR^{N\times m}$ be any matrix so that
    \begin{align}\label{eq:thm:lin_data:Yhat}
	\varphi(\hat{Y})\leq \min(\Phi(\theta),\Phi(\theta')).
    \end{align}
    Consider the curve $c_1:[0,1]\to\Omega$ defined as
    \begin{align*}
	c_1(\lambda)\!=\!\Big(h\Big((W_l',b_l')_{l=2}^L, (1-\lambda)F_L+\lambda\hat{Y}\Big),(W_l',b_l')_{l=2}^L\Big).
    \end{align*}
    Note that $c_1$ is continuous as $h$ is continuous, and it holds:
    \begin{align*}
	c_1(0)=\theta,\quad c_1(1)=\Big(h\Big((W_l',b_l')_{l=2}^L, \hat{Y}\Big),(W_l',b_l')_{l=2}^L\Big).
    \end{align*}
    It follows from the definition of $\Phi$, $c_1(\lambda)$ and \eqref{eq:thm:lin_data:h} that
    \begin{align*}
	\Phi(c_1(\lambda))
	=\varphi(F_L(c_1(\lambda)))
	=\varphi((1-\lambda)F_L+\lambda\hat{Y})
    \end{align*}
    and thus by convexity of $\varphi$,
    \begin{align*}
	\Phi(c_1(\lambda))
	&\leq (1-\lambda)\varphi(F_L) + \lambda\varphi(\hat{Y})\\
	&\leq(1-\lambda)\Phi(\theta) + \lambda\Phi(\theta)\nonumber
	=\Phi(\theta)\nonumber,
    \end{align*}
    which implies that $c_1[0,1]$ is entirely contained in $L_\alpha.$
    Similarly, we can also construct a curve $c_2(\cdot)$ inside $L_\alpha$ which starts at $\theta'$ and satisfies
    \begin{align*}
	c_2(0)=\theta',\; c_2(1)=\Big(h\Big((W_l',b_l')_{l=2}^L, \hat{Y}\Big),(W_l',b_l')_{l=2}^L\Big).
    \end{align*}
    So far, the curves $c_1$ and $c_2$ start at $\theta$ and $\theta'$ respectively and meet at the same point $c_1(1)=c_2(1).$
    
    Overall, we have shown that starting from any two points in $L_\alpha$
    we can find two continuous curves so that the loss is non-increasing on each curve,
    and these curves meet at a common point in $L_\alpha,$ and so $L_\alpha$ has to be connected.
    Moreover, the point where they meet satisfies $\Phi(c_1(1))=\varphi(\hat{Y}).$
    From \eqref{eq:thm:lin_data:Yhat}, $\varphi(\hat{Y})$ can be chosen arbitrarily small,
    and thus $\Phi$ can attain any value arbitrarily close to $p^*.$
    
    \textit{2.}  
    Let $C$ be a non-empty connected component of some level set, 
    i.e. $C\subseteq\Phi^{-1}(\alpha)$ for some $\alpha\in\RR.$
    Let $\theta=(W_l,b_l)_{l=1}^L\in C.$
    Similar as above,
    we first use Lemma \ref{lem:any_to_fullrankW} to find a continuous path from $\theta$ to some other point where $W_2$ attains full rank,
    and the loss is invariant on the path.
    From that point, we apply Lemma \ref{lem:canonical_form} with $k=2$ 
    to obtain another continuous path (with constant loss) which leads us to
    $\theta'\bydef\Big(h\Big((W_2,b_2), F_2(\theta)\Big), (W_2,b_2),\ldots,(W_L,b_L)\Big)$
    where $h:\Omega_2^*\to\Omega_1$ is a continuous map satisfying that
    \begin{align*}
	F_2\Big(h\Big((\hat{W}_2,\hat{b}_2),A\Big),(\hat{W}_l,\hat{b}_l)_{l=2}^L\Big)=A,
    \end{align*}
    for every point $(\hat{W}_l,\hat{b}_l)_{l=1}^L$ such that $\hat{W}_2$ has full rank,
    and every $A\in\RR^{N\times n_2}.$
    Note that $\theta'\in C$ as the loss is constant on the above paths.
    Consider the following continuous curve 
    $$c(\lambda)=\Big(h\Big((\lambda W_2,b_2),F_2(\theta)\Big),(\lambda W_2,b_2),\ldots,(W_L,b_L)\Big)$$
    for every $\lambda\geq 1.$
    This curve starts at $\theta'$ since $c(1)=\theta'.$ 
    Moreover $F_2(c(\lambda))=F_2(\theta)$ for every $\lambda\geq 1$
    and thus the loss is constant on this curve, meaning that the entire curve belongs to $C.$
    Lastly, since $W_2$ is full-rank, the curve $c$ is unbounded as $\lambda$ goes to infinity,
    thus $C$ is unbounded.
\end{proof1}
\section{Large Width of One of Hidden Layers Leads to No Bad Local Valleys}
In the previous section, we show that linearly independent training data essentially leads to connected sublevel sets.
In this section, we show the first application of this result in proving absence of bad local valleys on the loss landscape 
of deep and wide neural nets with arbitrary training data.
\begin{definition}\label{def:bad_valley}
    A local valley is a nonempty connected component of some strict sublevel set $L_{\alpha}^s\bydef\Setbar{\theta}{\Phi(\theta) < \alpha}.$
    A bad local valley is a local valley on which the training loss $\Phi$ cannot be made arbitrarily close to $p^*$.
\end{definition}
Intuitively, one can see that a small neighborhood of any suboptimal strict local minimum is a bad local valley.
However, we note that the notion of bad local valleys as analyzed in this paper has a more general meaning.
In particular, a bad local valley need not be restricted to any neighborhood of a bad local minimum
but can be any arbitrary subset in parameter space over which the infimum of the loss is strictly larger than the infimum over the entire space.
This is demonstrated via two simple examples as shown in Figure \ref{fig:bad_valleys}:
in the left example, the corresponding bad local valley contains multiple local minima, and in the right example, the bad valley does not even contain 
any local minimum.

\begin{figure}[ht]
\begin{center}
\includegraphics[height=0.18\columnwidth]{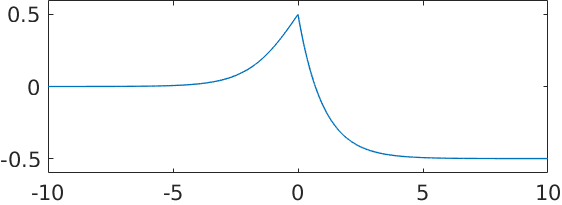}
\includegraphics[height=0.18\columnwidth]{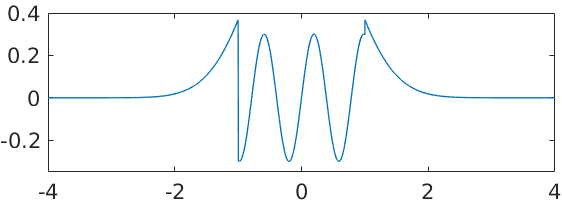}
\end{center}
\caption{
\textbf{Left}: an example function with exponential tails where local minima do NOT exist but local/global valleys still exist.
\textbf{Right}: a different function which satisfies every local minimum is a global minimum, 
but bad local valleys still exist at both infinities (exponential tails) where local search algorithms easily get stuck.
}
\label{fig:local_minima_bad_valley}
\end{figure}

Our main result in this section is stated as follows.
\begin{theorem}\label{thm:no_bad_valleys}
    Let Assumption \ref{ass:act1} and Assumption \ref{ass:act2} hold.
    Suppose that there exists a layer $k\in[1,L-1]$ such that $n_k\geq N$ and $n_{k+1}>\ldots> n_L.$
    Then the following hold:
    \begin{enumerate}
	\item The loss $\Phi$ has no bad local valleys.
	\item If $k\leq L-2$ then every local valley of $\Phi$ is unbounded.
    \end{enumerate}
\end{theorem}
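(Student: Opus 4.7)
The plan is to reduce both statements to Theorem \ref{thm:lin_data} applied to the ``sub-network'' consisting of layers $k+1,\ldots,L$, with the output $F_k$ of layer $k$ playing the role of its input data. Under the hypothesis this sub-network has $N$ samples in $\RR^{n_k}$, depth $L-k$, and widths $n_{k+1}>\ldots>n_L$, which matches the decreasing-width hypothesis of Theorem \ref{thm:lin_data}. If at some parameter we can guarantee $\rank(F_k)=N$, then the sub-network's ``training data'' is linearly independent and Theorem \ref{thm:lin_data} applies to it verbatim.

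The crucial preliminary step is to show that the set $R$ of parameters $(W_l,b_l)_{l=1}^k$ for which $\rank(F_k)=N$ is open and dense. Openness follows from continuity of $F_k$ and the fact that ``$N\times n_k$ matrix of rank $N$'' is an open condition when $n_k\geq N$. Density is the substantive part and is where both Assumption \ref{ass:act2} and Assumption \ref{ass:data} enter: around any fixed parameters we must produce an arbitrarily small perturbation of layers $1,\ldots,k$ that renders the rows of $F_k$ linearly independent. I expect this density claim to be the main technical hurdle. The natural strategy is to hold layers $2,\ldots,k$ fixed and perturb only the first-layer weights and biases, so that the resulting rows $\sigma\!\big(x_i^T W_1+b_1^T\big)$ of the first-layer output are $N$ linearly independent vectors in $\RR^{n_1}$; Assumption \ref{ass:act2} excludes exactly those activation functions (such as a purely linear one) for which an algebraic identity among shifts of $\sigma$ could force these rows into a proper subspace, while Assumption \ref{ass:data} guarantees distinct inputs so that generic shifts of $\sigma$ evaluated on them are independent. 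One then needs to propagate this through layers $2,\ldots,k$ and argue the full-rank property survives on a dense subset.

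Once density of $R$ is established, the no-bad-valley conclusion follows quickly. Let $V$ be a local valley, that is, a connected component of $L_\alpha^s=\{\theta:\Phi(\theta)<\alpha\}$, which is open since $\Phi$ is continuous. Pick any $\theta\in V$. Density of $R$ combined with openness of $V$ lets us find $\theta'\in V\cap R$ arbitrarily close to $\theta$, and a short straight segment inside $V$ connects them. Now freeze the parameters of layers $1,\ldots,k$ at those of $\theta'$ and view $\Phi$ as a function of $(W_l,b_l)_{l=k+1}^L$ only: this is exactly the loss of the sub-network on the full-row-rank input $F_k(\theta')$. Theorem \ref{thm:lin_data} then guarantees that its sublevel sets are connected and that its loss attains values arbitrarily close to $p^*$, so within the sub-network sublevel set at level $\Phi(\theta')$ one can continuously connect $(W_l',b_l')_{l=k+1}^L$ to parameters at which $\Phi<p^*+\varepsilon$ for any prescribed $\varepsilon>0$. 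Lifting this curve to the full parameter space while keeping layers $1,\ldots,k$ at those of $\theta'$ produces a connected curve inside $\{\Phi\leq\Phi(\theta')\}\subseteq L_\alpha^s$ that starts in $V$ and therefore lies entirely in $V$. The edge case $k=L-1$ is handled directly: the sub-network is a single affine layer $F_{L-1}W_L+\ones_N b_L^T$ with convex loss, and the full-row-rank input lets any output matrix be realized, so $p^*$ is again approachable.

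For the unbounded-valley statement the additional hypothesis $k\leq L-2$ gives $L-k\geq 2$, so Theorem \ref{thm:lin_data} part 2 applies to the sub-network. With $\theta'\in V\cap R$ chosen as above, the connected component of the sub-network level set at value $\Phi(\theta')$ passing through $(W_l',b_l')_{l=k+1}^L$ is unbounded; lifting it to the full parameter space with layers $1,\ldots,k$ fixed produces an unbounded continuous curve through $\theta'$ on which $\Phi$ is constant and equal to $\Phi(\theta')<\alpha$. The curve is connected and meets $V$, so it lies entirely in $V$, and hence $V$ is unbounded.
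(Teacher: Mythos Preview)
Your reduction to Theorem \ref{thm:lin_data} on the sub-network with input $F_k$ matches the paper's strategy, and your arguments for both parts are sound \emph{once} a point with $\rank(F_k)=N$ has been located inside the valley. The gap is precisely there: your density claim for $R=\{(W_l,b_l)_{l=1}^k:\rank(F_k)=N\}$ is false in general. For a concrete counterexample take $k=1$, Leaky-ReLU $\sigma(x)=\max(x,\alpha x)$ with $\alpha\in(0,1)$, and suppose $d+1<N\leq n_1$. In an entire open neighborhood of $(W_1,b_1)=(0,-\ones_{n_1})$ every pre-activation is strictly negative, so on that neighborhood $F_1=\alpha(XW_1+\ones_N b_1^T)$ and hence $\rank(F_1)\leq\rank([X,\ones_N])\leq d+1<N$. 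Density fails because a piecewise linear $\sigma$ can be affine on an open region of parameters, capping the rank at the input dimension plus one; small perturbations of layers $1,\ldots,k$ alone cannot escape this, and your ``propagate through layers'' sketch does not address it.

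The paper reaches a full-rank $F_k$ inside $C$ by a constructive path that also moves $W_{k+1}$. It first uses openness of $C$ only for the genuinely generic (measure-zero complement) condition that the rows of $F_{k-1}$ are pairwise distinct and each column of $F_{k-1}W_k$ has distinct entries. Then Lemma \ref{lem:FW} continuously deforms $W_{k+1}$ along a path on which $F_kW_{k+1}$ (and hence $\Phi$) is held fixed, arriving at a configuration where $F_kW_{k+1}$ no longer depends on a chosen block of columns of $F_k$. Those columns can now be altered freely via the corresponding entries of $b_k$ without affecting the loss, and Lemma \ref{lem:span} (this is where Assumption \ref{ass:act2} actually enters) guarantees that suitable bias shifts raise $\rank(F_k)$ to $N$. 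The essential point you are missing is that one must move $W_{k+1}$---a parameter of the sub-network, not of layers $1,\ldots,k$---in coordination with $b_k$ along loss-invariant paths, rather than merely perturb the lower layers.
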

The conditions of Theorem \ref{thm:no_bad_valleys} 
are satisfied for any strictly monotonic and piecewise linear activation function such as Leaky-ReLU (see Lemma \ref{lem:exp_act}).
We note that for Leaky-ReLU and other similar homogeneous activation functions, 
the second statement of Theorem \ref{thm:no_bad_valleys} is straightforward.
Indeed, if one scales all parameters of one hidden layer by some arbitrarily large factor $k>0$ 
and the weight matrix of the following layer by $1/k$ then the network output will be unchanged,
and so every connected component of every level set (and sublevel set) must extend to infinity and thus be unbounded.
However, for general non-homogeneous activation functions, this statement is non-trivial.

The first statement of Theorem \ref{thm:no_bad_valleys} implies that 
there is a continuous path from any point in parameter space 
on which the loss is non-increasing and gets arbitrarily close to $p^*.$
At this point, one might wonder that if a function satisfying ``every local minimum is a global minimum'' 
would automatically contain no bad local valleys.
Unfortunately this is not true in general.
Indeed, Figure \ref{fig:local_minima_bad_valley} shows two counter-examples 
where a function does not have any bad local mimina, but bad local valleys still exist.
The reason for this lies at the fact that bad local valleys 
in general need not contain any local minimum, or even critical point, although in theory they can have arbitrarily large volume.
Thus any pure results on global optimality of local minima with no further information on the loss 
would not be sufficient to guarantee convergence of local search algorithms to the bottom of the loss landscape,
especially if they are initialized in such non-optimal valleys.
Similar phenomenon has also been observed by \cite{Jascha2019}.

We note that while the statements of Theorem \ref{thm:no_bad_valleys} imply the absence of strict local minima and bounded local valleys,
they do not rule out the possibility of non-strict bad local minima.
Overall, the two statements of Theorem \ref{thm:no_bad_valleys} imply that every local valley must be an ``unbounded global valley''
in which the loss can attain any value arbitrarily close to $p^*.$

The high level proof idea for Theorem \ref{thm:no_bad_valleys} is that inside every local valley one can find a point
where the feature representations of all training samples are linearly independent at the wide hidden layer,
and thus an application of Theorem \ref{thm:lin_data} to the subnetwork from this wide layer till the output layer yields the result.
We list below several technical lemmas which are helpful to prove Theorem \ref{thm:no_bad_valleys}.
\begin{lemma}\label{lem:FW}
    Let $(F,W,\mathcal{I})$ be such that $F\in\RR^{N\times n}, W\in\RR^{n\times p},\rank(F)<n$
    and $\mathcal{I}\subset\Set{1,\ldots,n}$ be a subset of columns of $F$
    so that $\rank(F(:,\mathcal{I}))=\rank(F)$ and $\bar{\mathcal{I}}$ the remaining columns.
    Then there exists a continuous curve $c:[0,1]\to\RR^{n\times p}$ 
    which satisfies the following:
    \begin{enumerate}
	\item $c(0)=W$ and $Fc(\lambda)=FW,\,\forall\,\lambda\in[0,1].$
	\item The product $Fc(1)$ is independent of $F(:,\bar{\mathcal{I}}).$
    \end{enumerate}
\end{lemma}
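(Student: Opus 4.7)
The plan is to construct $c(\lambda)$ explicitly as a linear interpolation in parameter space that transfers the contribution of the ``redundant'' rows of $W$ (those indexed by $\bar{\mathcal{I}}$) into the rows indexed by $\mathcal{I}$, while preserving the product $FW$ throughout. The key observation is the algebraic identity that makes this transfer possible: since $\rank(F(:,\mathcal{I})) = \rank(F)$, the columns of $F(:,\bar{\mathcal{I}})$ lie in the column span of $F(:,\mathcal{I})$, so there exists a matrix $E$ of appropriate size with $F(:,\bar{\mathcal{I}}) = F(:,\mathcal{I})\, E$.

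With $E$ in hand, I would define $c:[0,1]\to\RR^{n\times p}$ by specifying its $\mathcal{I}$- and $\bar{\mathcal{I}}$-rows:
\begin{align*}
    c(\lambda)(\mathcal{I},:) &= W(\mathcal{I},:) + \lambda\, E\, W(\bar{\mathcal{I}},:), \\
    c(\lambda)(\bar{\mathcal{I}},:) &= (1-\lambda)\, W(\bar{\mathcal{I}},:).
\end{align*}
Continuity is immediate from linearity in $\lambda$, and clearly $c(0) = W$. For the invariance claim, I would compute
\begin{align*}
    F\, c(\lambda)
    &= F(:,\mathcal{I})\, c(\lambda)(\mathcal{I},:) + F(:,\bar{\mathcal{I}})\, c(\lambda)(\bar{\mathcal{I}},:) \\
    &= F(:,\mathcal{I})\, W(\mathcal{I},:) + \lambda\, F(:,\mathcal{I})\, E\, W(\bar{\mathcal{I}},:) \\
    &\quad + (1-\lambda)\, F(:,\bar{\mathcal{I}})\, W(\bar{\mathcal{I}},:),
\end{align*}
and then use the identity $F(:,\mathcal{I})\, E = F(:,\bar{\mathcal{I}})$ to combine the last two terms into $F(:,\bar{\mathcal{I}})\, W(\bar{\mathcal{I}},:)$, giving exactly $F(:,\mathcal{I})\, W(\mathcal{I},:) + F(:,\bar{\mathcal{I}})\, W(\bar{\mathcal{I}},:) = FW$.

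For the second conclusion, observe that at $\lambda = 1$ the submatrix $c(1)(\bar{\mathcal{I}},:) = 0$, so
\begin{align*}
    F\, c(1)
    = F(:,\mathcal{I})\, c(1)(\mathcal{I},:) + F(:,\bar{\mathcal{I}})\cdot 0
    = F(:,\mathcal{I}) \bigl[W(\mathcal{I},:) + E\, W(\bar{\mathcal{I}},:)\bigr].
\end{align*}
Every occurrence of $F(:,\bar{\mathcal{I}})$ has been eliminated from the expression on the right, so $F c(1)$ depends only on $F(:,\mathcal{I})$ and $W$, which is the desired independence from $F(:,\bar{\mathcal{I}})$.

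There is no real obstacle here beyond identifying the correct linear combination; the whole lemma is essentially a formalization of the ``canonicalization'' step already used implicitly inside the proof of Lemma \ref{lem:any_to_fullrankW}, and the construction above is precisely the matrix analogue of that maneuver. The only minor subtlety worth noting in writing it up is that $E$ is not unique when $F(:,\mathcal{I})$ fails to have full column rank along $\bar{\mathcal{I}}$, but any choice of $E$ satisfying $F(:,\bar{\mathcal{I}}) = F(:,\mathcal{I})E$ works, and such $E$ exists by the rank hypothesis.
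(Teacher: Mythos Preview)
Your proposal is correct and essentially identical to the paper's proof: both introduce a matrix $E$ with $F(:,\bar{\mathcal{I}}) = F(:,\mathcal{I})\,E$ and then linearly interpolate the rows of $W$ so as to shift the $\bar{\mathcal{I}}$-contribution onto the $\mathcal{I}$-rows while zeroing out the $\bar{\mathcal{I}}$-rows. The only cosmetic difference is that the paper packages the index bookkeeping via a permutation matrix $P$, whereas you write the $\mathcal{I}$- and $\bar{\mathcal{I}}$-rows directly; the computations and conclusions are the same.
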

\ifpaper
\begin{proof}
    Let $r=\rank(F)<n.$ 
    Since $\mathcal{I}$ contains $r$ linearly independent columns of $F$,
    the remaining columns must lie on their span.
    In other words, there exists $E\in\RR^{r\times(n-r)}$ 
    so that $F(:,\bar{\mathcal{I}})=F(:,\mathcal{I})\,E.$
    Let $P\in\RR^{n\times n}$ be a permutation matrix which permutes the columns of $F$ according to $\mathcal{I}$ so that 
    we can write $F=[F(:,\mathcal{I}),F(:,\bar{\mathcal{I}})]\,P.$
    Consider the continuous curve $c:[0,1]\to\RR^{n\times p}$ defined as
    \begin{align*}
	c(\lambda)=P^T\,\begin{bmatrix}W(\mathcal{I},:)+\lambda E\,W(\bar{\mathcal{I}},:)\\ (1-\lambda)W(\bar{\mathcal{I}},:)\end{bmatrix}, \,\forall\,\lambda\in[0,1].
    \end{align*}
    It holds $c(0)=P^T\,\begin{bmatrix}W(\mathcal{I},:)\\W(\bar{\mathcal{I}},:)\end{bmatrix}=W.$
    For every $\lambda\in[0,1]:$
    \begin{align*}
	Fc(\lambda) 
	&= [F(:,\mathcal{I}),F(:,\bar{\mathcal{I}})]\,PP^T\, \begin{bmatrix}W(\mathcal{I},:)+\lambda E\,W(\bar{\mathcal{I}},:)\\ (1-\lambda)W(\bar{\mathcal{I}},:)\end{bmatrix}\\
	&= F(:,\mathcal{I}) W(\mathcal{I},:) + F(:,\bar{\mathcal{I}}) W(\bar{\mathcal{I}},:) = FW .
    \end{align*}
    Lastly, we have
    \begin{align*}
	Fc(1) 
	&= [F(:,\mathcal{I}),F(:,\bar{\mathcal{I}})]\,PP^T\,\begin{bmatrix}W(\mathcal{I},:)+ EW(\bar{\mathcal{I}},:)\\ \textbf{0}\end{bmatrix}\\
	&= F(:,\mathcal{I}) W(\mathcal{I},:) + F(:,\mathcal{I}) E\, W(\bar{\mathcal{I}},:)
    \end{align*}
    which is independent of $F(:,\bar{\mathcal{I}}).$
\end{proof}
\fi
\begin{lemma}\label{lem:span}
    Given $v\in\RR^n$ with $v_i\neq v_j\,\forall\,i\neq j$, 
    and $\sigma:\RR\to\RR$ satisfies Assumption \ref{ass:act2}. 
    Let $S\subseteq\RR^n$ be defined as $S=\Setbar{\sigma(v+b\ones_n)}{b\in\RR}.$
    Then it holds $\Span(S)=\RR^n.$
\end{lemma}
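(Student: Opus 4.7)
The plan is to argue by contradiction using duality. Suppose $\Span(S) \subsetneq \RR^n$; then there exists a nonzero vector $\lambda \in \RR^n$ orthogonal to every element of $S$, that is,
\[
    \sum_{i=1}^n \lambda_i\, \sigma(v_i + b) = 0 \quad \textrm{for every } b \in \RR.
\]
I would then translate this identity into the exact form forbidden by Assumption \ref{ass:act2} and derive a contradiction.

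Concretely, after reindexing so that $\lambda_1 \neq 0$, I would perform the change of variable $x = b + v_1$. Writing $a_i = v_1 - v_i$ for $i \geq 2$, the identity becomes
\[
    \sigma(x) = \sum_{i=2}^n \Bigl(-\frac{\lambda_i}{\lambda_1}\Bigr)\, \sigma(x - a_i) \quad \textrm{for all } x \in \RR.
\]
Since the $v_i$ are pairwise distinct, the shifts $a_i$ (for $i \geq 2$) are pairwise distinct and nonzero. Restricting the sum to the index set $J = \{i \geq 2 : \lambda_i \neq 0\}$ leaves only nonzero coefficients $\mu_i = -\lambda_i/\lambda_1$ paired with distinct shifts $a_i$, which is precisely the configuration ruled out by Assumption \ref{ass:act2}, yielding the contradiction.

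The only subtle point is the possibility that $J$ is empty, i.e.\ all $\lambda_i = 0$ for $i \geq 2$. In that case the original identity reduces to $\lambda_1 \sigma(v_1 + b) = 0$ for all $b$, forcing $\sigma \equiv 0$. But if $\sigma$ were identically zero then every equation of the form $\sigma(x) = \sum_i \mu_i \sigma(x - a_i)$ would hold trivially for arbitrary nonzero $(\mu_i, a_i)$, violating Assumption \ref{ass:act2}. Hence $\sigma \not\equiv 0$ under the hypothesis, and the degenerate case is excluded. This is the main (and essentially the only) obstacle: verifying that Assumption \ref{ass:act2} really does apply once we have stripped off the zero coefficients, and in particular that $\sigma$ cannot be trivial. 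Once these bookkeeping details are handled, the lemma follows immediately from the contrapositive of Assumption \ref{ass:act2}.
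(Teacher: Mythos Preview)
Your proposal is correct and follows essentially the same approach as the paper: argue by contradiction via an orthogonal vector $\lambda$, isolate an index with $\lambda_1\neq 0$, change variables to $x=b+v_1$, and read off a forbidden shift identity contradicting Assumption~\ref{ass:act2}. If anything you are slightly more careful than the paper, which does not explicitly discuss stripping the zero coefficients or the degenerate case $J=\emptyset$.
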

\ifpaper
\begin{proof}
    Suppose by contradiction that $\dim(\Span(S))<n.$
    Then there exists $\lambda\in\RR^n,\lambda\neq 0$ such that $\lambda\perp\Span(S),$
    and thus it holds $\sum_{i=1}^n \lambda_i\sigma(v_i+b)=0$ for every $b\in\RR.$
    We assume w.l.o.g. that $\lambda_1\neq 0$ then it holds $$\sigma(v_1+b)=-\sum_{i=2}^n \frac{\lambda_i}{\lambda_1}\sigma(v_i+b),\quad\forall\,b\in\RR.$$
    By a change of variable, we have $$\sigma(c)=-\sum_{i=2}^n\frac{\lambda_i}{\lambda_1}\sigma(c+v_i-v_1),\quad\forall\,c\in\RR,$$
    which contradicts Assumption \ref{ass:act2}.
    Thus $\Span(S)=\RR^n.$
\end{proof}
\fi
We recall the following standard result from topology (e.g., see \citet{Apostol1974}, Theorem 4.23, p. 82).
\begin{proposition}\label{prop:inverse_open}
    Let $f:\RR^m\to\RR^n$ be a continuous function.
    If $U\subseteq\RR^n$ is an open set then $f^{-1}(U)$ is also open.
\end{proposition}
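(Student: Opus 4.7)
The plan is to prove that $f^{-1}(U)$ is open directly from the standard $\varepsilon$-$\delta$ characterization of continuity in Euclidean space, by showing every point of $f^{-1}(U)$ has an open ball around it contained in $f^{-1}(U)$.

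First I would pick an arbitrary point $x \in f^{-1}(U)$, which by definition means $f(x) \in U$. Since $U$ is open in $\RR^n$, there exists some $\varepsilon > 0$ such that the open ball $B(f(x), \varepsilon) = \Setbar{y \in \RR^n}{\norm{y - f(x)} < \varepsilon}$ is entirely contained in $U$. Next I would invoke continuity of $f$ at the point $x$: there exists $\delta > 0$ such that for every $z \in \RR^m$ with $\norm{z - x} < \delta$, one has $\norm{f(z) - f(x)} < \varepsilon$.

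Combining these two steps, for every $z \in B(x, \delta)$ we have $f(z) \in B(f(x), \varepsilon) \subseteq U$, so $z \in f^{-1}(U)$. Therefore $B(x, \delta) \subseteq f^{-1}(U)$, which shows $x$ is an interior point of $f^{-1}(U)$. Since $x$ was arbitrary, every point of $f^{-1}(U)$ is an interior point, and hence $f^{-1}(U)$ is open in $\RR^m$.

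There is essentially no technical obstacle here; the proof is a straightforward unwinding of definitions and the whole content is packaging the $\varepsilon$-$\delta$ definition of continuity into a statement about preimages of open balls. The only minor care needed is to handle the edge case where $f^{-1}(U)$ is empty (in which case it is vacuously open), but this is immediate and can be dispensed with in one line. The result is so standard that the proof could just as well be given as a citation to \citet{Apostol1974}, which is presumably why the authors chose to state it as a proposition without proof in the main text.
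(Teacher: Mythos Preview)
Your proof is correct and is the standard $\varepsilon$-$\delta$ argument; there is nothing to fault. As you correctly anticipated, the paper does not supply its own proof of this proposition at all but simply cites \citet{Apostol1974}, Theorem 4.23, so there is no alternative approach to compare against.
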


\subsection{Proof of Theorem \ref{thm:no_bad_valleys}}
\begin{proof1}\textit{1.}
    Let $C$ be a connected component of some strict sublevel set $L_\alpha^s=\Phi^{-1}((-\infty,\alpha)),$
    for some $\alpha>p^*.$
    By Proposition \ref{prop:inverse_open}, $L_\alpha^s$ is an open set and thus $C$ must be open.
    
    \underline{Step 1: Finding a point inside $C$ where $F_k$ has full rank.}
    Let $\theta\in C$ be such that the pre-activation outputs at the first hidden layer are distinct for all training samples.
    Note that such $\theta$ always exist since Assumption \ref{ass:data} implies that
    the set of $W_1$ where this does not hold has Lebesgue measure zero, whereas $C$ has positive measure. 
    This combined with Assumption \ref{ass:act1} implies that the (post-activation) outputs at the first hidden layer
    are distinct for all training samples.
    Now one can view these outputs at the first layer as inputs to the next layer and argue similarly.
    By repeating this argument and using the fact that $C$ has positive measure, 
    we conclude that there exists $\theta\in C$ such that the outputs at layer $k-1$ are distinct for all training samples,
    \ie $(F_{k-1})_{i:}\neq (F_{k-1})_{j:}$ for every $i\neq j.$
    Let $V$ be the pre-activation output (without bias term) at layer $k,$ 
    in particular $V=F_{k-1}W_k=[v_1,\ldots,v_{n_k}]\in\RR^{N\times n_k}.$ 
    Since $F_{k-1}$ has distinct rows, one can easily perturb $W_k$ so that every column of $V$ has distinct entries.
    Note here that the set of $W_k$ where this does not hold has measure zero whereas $C$ has positive measure.
    Equivalently, $C$ must contain a point where every $v_j$ has distinct entries.
    To simplify notation, let $a=b_k\in\RR^{n_k},$ then by definition,
    \begin{align}\label{eq:thm:no_bad_valleys:F_k}
	F_k=[\sigma(v_1+\ones_N a_1),\ldots,\sigma(v_{n_k}+\ones_N a_{n_k})].
    \end{align}
    Suppose that $F_k$ has low rank, otherwise we are done.
    Let $r=\rank(F_k)<N\leq n_k$ and $\mathcal{I}\subset\Set{1,\ldots,n_k},|\mathcal{I}|=r$ be 
    the subset of columns of $F_k$ so that $\rank(F_k(:,\mathcal{I}))=\rank(F_k),$
    and $\bar{\mathcal{I}}$ the remaining columns.    
    By applying Lemma \ref{lem:FW} to $(F_k, W_{k+1},\mathcal{I}),$
    we can follow a continuous path with invariant loss (\ie entirely contained inside $C$) to arrive at some point 
    where $F_kW_{k+1}$ is independent of $F_k(:.\bar{\mathcal{I}}).$
    It remains to show how to change $F_k(:,\bar{\mathcal{I}})$ by modifying certain parameters so that $F_k$ has full rank.
    Let $p=|\bar{\mathcal{I}}|=n_k-r$ and $\bar{\mathcal{I}}=\Set{j_1,\ldots,j_p}.$ 
    From \eqref{eq:thm:no_bad_valleys:F_k} we have
    \begin{align*}
	F_k(:,\bar{\mathcal{I}})=[\sigma(v_{j_1}+\ones_N a_{j_1}),\ldots,\sigma(v_{j_p}+\ones_N a_{j_p})].
    \end{align*}
    Let $\mathrm{col}(\cdot)$ denotes the column space of a matrix.
    Then $\dim(\mathrm{col}(F_k(:,\mathcal{I})))=r<N.$
    Since $v_{j_1}$ has distinct entries, Lemma \ref{lem:span} implies that 
    there must exist $a_{j_1}\in\RR$ so that $\sigma(v_{j_1}+\ones_N a_{j_1})\notin\mathrm{col}(F_k(:,\mathcal{I})),$
    because otherwise $\Span\Setbar{\sigma(v_{j_1}+\ones_N a_{j_1})}{a_{j_1}\in\RR}\in\mathrm{col}(F_k(:,\mathcal{I}))$ 
    whose dimension is strictly smaller than $N$ and thus contradicts Lemma \ref{lem:span}. 
    So we pick one such value for $a_{j_1}$ and follow a direct line segment between its current value and the new value.
    Note that the loss is invariant on this segment since any changes on $a_{j_1}$ only affects $F_k(:,\bar{\mathcal{I}})$
    which however has no influence on the loss by above construction.
    Moreover, it holds at the new value of $a_{j_1}$ that $\rank(F_k)$ increases by $1.$ 
    Since $n_k\geq N$ by our assumption, it follows that $p\geq N-r$  
    and thus one can choose $\Set{a_{j_2},\ldots,a_{j_{N-r}}}$ in a similar way and finally obtain $\rank(F_k)=N.$
    
    \underline{Step 2: Applying Theorem \ref{thm:lin_data} to the subnetwork above $k$.}
    Suppose that we have found from previous step a $\theta=((W_l^*,b_l^*)_{l=1}^L)\in C$ so that $F_k$ has full rank.
    Let $g:\Omega_{k+1}\times\ldots\times\Omega_L$ be given as
    \begin{align}\label{eq:thm:no_bad_valleys:g}
	\!\!\!g\Big((W_l,b_l)_{l=k+1}^L\Big)\!=\!\Phi\Big((W_l^*,b_l^*)_{l=1}^k,(W_l,b_l)_{l=k+1}^L\Big)
    \end{align}
    We recall that $C$ is a connected component of $L_\alpha^s.$
    It holds $g\Big((W_l^*,b_l^*)_{l=k+1}^L\Big)=\Phi(\theta)\leq\alpha.$
    Now one can view $g$ as the new loss for the subnetwork from layer $k$ till layer $L$
    and $F_k$ can be seen as the new training data.
    Since $\rank(F_k)=N$ and $n_{k+1}>\ldots>n_L,$ 
    Theorem \ref{thm:lin_data} implies that $g$ has connected sublevel sets and 
    $g$ can attain any value arbitrarily close to $p^*.$
    Let $\epsilon\in(p^*,\alpha)$ and $(W_l',b_l')_{l=k+1}^L$ be any point such that $g\Big((W_l',b_l')_{l=k+1}^L\Big)\leq\epsilon.$
    Since both $(W_l^*,b_l^*)_{l=k+1}^L$ and $(W_l',b_l')_{l=k+1}^L$ belongs to the $\alpha$-sublevel set of $g,$
    which is a connected set, there must exist a continuous path from $(W_l^*,b_l^*)_{l=k+1}^L$ to $(W_l',b_l')_{l=k+1}^L$ 
    on which the value of $g$ is not larger than $\alpha.$
    This combined with \eqref{eq:thm:no_bad_valleys:g} implies that there is also a continuous path 
    from $\theta=\Big((W_l^*,b_l^*)_{l=1}^k,(W_l^*,b_l^*)_{l=k+1}^L\Big)$ 
    to $\theta'\bydef\Big((W_l^*,b_l^*)_{l=1}^k,(W_l',b_l')_{l=k+1}^L\Big)$
    on which the loss $\Phi$ is not larger than $\alpha.$
    Since $C$ is connected, it must hold $\theta'\in C.$
    Moreover, we have $\Phi(\theta')=g\Big((W_l',b_l')_{l=k+1}^L\Big)\leq\epsilon.$
    Since $\epsilon$ can be chosen arbitrarily small and close to $p^*$, we conclude that 
    the loss $\Phi$ can be made arbitrarily small inside $C,$ and thus $\Phi$ has no bad local valleys.
    
    \textit{2.}
    Let $C$ be a local valley, which by Definition \ref{def:bad_valley} is
    a connected component of some strict sublevel set $L_\alpha^s=\Phi^{-1}((-\infty,\alpha)).$
    According the the proof of the first statement above, 
    one can find a $\theta=(W_l^*,b_l^*)_{l=1}^L\in C$ 
    so that $F_k(\theta)$ has full rank.
    Now one can view $F_k(\theta)$ as the training data for the subnetwork from layer $k$ till layer $L.$
    The new loss is defined for this subnetwork as
    \begin{align*}
	g\Big((W_l,b_l)_{l=k+1}^L\Big) = \Phi\Big((W_l^*,b_l^*)_{l=1}^k,(W_l,b_l)_{l=k+1}^L\Big).
    \end{align*}
    By our assumptions, $\sigma$ satisfies Assumption \ref{ass:act1} and $n_{k+1}>\ldots>n_L,$
    thus the above subnetwork with the new loss $g$ and training data $F_k(\theta)$
    satisfy all the conditions of Theorem \ref{thm:lin_data},
    and so it follows that $g$ has unbounded level set components.
    Let $\beta\bydef g\Big((W_l^*,b_l^*)_{l=k+1}^L\Big)=\Phi(\theta)<\alpha.$
    Let $E$ be a connected component of the level set $g^{-1}(\beta)$
    which contains $(W_l^*,b_l^*)_{l=k+1}^L.$
    Let $D=\Setbar{\Big((W_l^*,b_l^*)_{l=1}^k,(W_l,b_l)_{l=k+1}^L\Big)}{(W_l,b_l)_{l=k+1}^L\in E}.$
    Then $D$ is connected and unbounded since $E$ is connected and unbounded.
    It holds for every $\theta'\in D$ that $\Phi(\theta')=\beta,$
    and thus $D\subseteq\Phi^{-1}(\beta)\subseteq L_\alpha^s,$
    where the last inclusion follows from $\beta<\alpha.$
    Moreover, we have $\theta=\Big((W_l^*,b_l^*)_{l=1}^k,(W_l^*,b_l^*)_{l=k+1}^L\Big)\in D$
    and also $\theta\in C$,
    it follows that $D\subseteq C$ since $C$ is already the maximal connected component of $L_\alpha^s.$
    Since $D$ is unbounded, $C$ must also be unbounded, which finishes the proof.
\end{proof1}

\begin{figure}
\begin{center}
\includegraphics[width=0.9\linewidth]{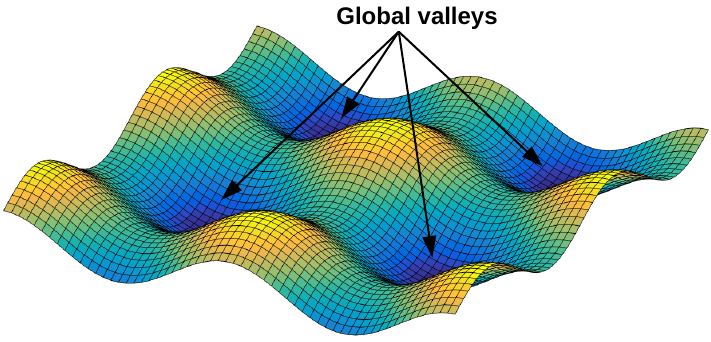}
\end{center}
\caption{A function which has no bad local valleys. Multiple valleys are caused due to the disconnectedness of sublevel sets.}
\label{fig:multiple_global_valleys}
\end{figure}
\section{Large Width of First Hidden Layer Leads to Connected Sublevel Sets}
In Theorem \ref{thm:no_bad_valleys},
we show that if one of the hidden layers has at least $N$ neurons then the loss function has no bad valleys -- see Figure \ref{fig:multiple_global_valleys} for an example.
Note that according to Theorem \ref{thm:no_bad_valleys} all the global valleys must be unbounded.
However for the purpose of illustration, we only plot bounded valleys in Figure \ref{fig:multiple_global_valleys} 
as the key point which we want to mention about this theorem here is that:
every local valley is a global valley, and there could exist multitude of them.
In this section, by analyzing a special case of the network where the first hidden layer has at least $2N$ neurons,
we can show further that there is only one such global valley.
\begin{theorem}\label{thm:connected_sublevel_sets}
    Let Assumption \ref{ass:act1} and Assumption \ref{ass:act2} hold.
    Suppose that $n_1\geq 2N$ and $n_2>\ldots> n_L.$
    Then every sublevel set of $\Phi$ is connected.
    Moreover, every connected component of every level set of $\Phi$ is unbounded.
\end{theorem}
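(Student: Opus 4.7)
Proof plan.

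The overall plan is to reduce the theorem to Theorem \ref{thm:lin_data} applied to the subnetwork consisting of layers $2$ through $L$, treating the first-layer output $F_1$ as the ``training data'' for that subnetwork. Since this subnetwork already has strictly decreasing widths $n_2 > \ldots > n_L$, as soon as $F_1$ has rank $N$ we obtain, by Theorem \ref{thm:lin_data}, both connectedness of its sublevel sets and unboundedness of its level-set components. The role of the stronger width hypothesis $n_1 \geq 2N$ (versus the $n_k \geq N$ of Theorem \ref{thm:no_bad_valleys}) is precisely that it leaves enough slack in the first layer to move $(W_1, b_1)$ itself continuously, along a loss-invariant path, to a universal canonical configuration $(W_1^*, b_1^*)$ independent of the starting $\theta$.

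For any $\theta \in L_\alpha$ I would build such a loss-invariant path $\theta \to \theta^c$ with $(W_1^c, b_1^c) = (W_1^*, b_1^*)$ in five steps. \emph{(i)} Using the measure/perturbation argument from Step 1 of the proof of Theorem \ref{thm:no_bad_valleys}, move within the current connected component to a point where $F_1$ has rank $N$, and pick $\mathcal{I} \subset \{1, \ldots, n_1\}$, $|\mathcal{I}| = N$, indexing linearly independent columns. \emph{(ii)} Apply Lemma \ref{lem:FW} to $(F_1, W_2, \mathcal{I})$ to reach a state where $F_1 W_2$ is independent of $F_1(:, \bar{\mathcal{I}})$; from the construction in that lemma we may in fact assume $W_2(\bar{\mathcal{I}}, :) = 0$, so the $n_1 - N \geq N$ ``inactive'' neurons indexed by $\bar{\mathcal{I}}$ are decoupled from the loss. \emph{(iii)} Fix in advance canonical targets $(W_1^*, b_1^*)$, chosen so that $F_1^*(:, \mathcal{I}^*)$ has rank $N$ for a pre-chosen subset $\mathcal{I}^* \subset \bar{\mathcal{I}}$ with $|\mathcal{I}^*| = N$, and continuously move $(W_1(:, \bar{\mathcal{I}}), b_1(\bar{\mathcal{I}}))$ to $(W_1^*(:, \bar{\mathcal{I}}), b_1^*(\bar{\mathcal{I}}))$; this is loss-invariant since those coordinates are decoupled. \emph{(iv)} Now both $F_1(:, \mathcal{I})$ and $F_1(:, \mathcal{I}^*)$ are invertible $N \times N$ blocks, so with $M = F_1(:, \mathcal{I}^*)^{-1} F_1(:, \mathcal{I})$ linearly interpolate $W_2$ to the configuration with rows $\mathcal{I}$ zero and rows $\mathcal{I}^*$ equal to $M\, W_2(\mathcal{I}, :)$; a direct computation gives $F_1 W_2(\lambda) = F_1 W_2$ throughout, so the loss stays constant. \emph{(v)} With $\mathcal{I}^*$ now the sole active set, the positions in $\bar{\mathcal{I}^*}$ (which contain $\mathcal{I}$) are decoupled, so continuously drive $(W_1(:, \bar{\mathcal{I}^*}), b_1(\bar{\mathcal{I}^*}))$ to $(W_1^*(:, \bar{\mathcal{I}^*}), b_1^*(\bar{\mathcal{I}^*}))$, completing the canonicalization.

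Applied to two points $\theta_1, \theta_2 \in L_\alpha$, this produces $\theta_1^c, \theta_2^c \in L_\alpha$ that share $(W_1^*, b_1^*)$ and hence a common rank-$N$ input $F_1^*$ to the layers-$2$-through-$L$ subnetwork. Theorem \ref{thm:lin_data} then asserts that the subnetwork's $\alpha$-sublevel set in $(W_l, b_l)_{l=2}^L$ is connected, which lifts to a path in $L_\alpha$ from $\theta_1^c$ to $\theta_2^c$ with $(W_1, b_1)$ held fixed at canonical; this proves that $L_\alpha$ is connected. For the unboundedness of level-set components, the entire canonicalization is loss-invariant, so any $\theta$ in a connected component $C \subseteq \Phi^{-1}(\alpha)$ is driven to some $\theta^c \in C$; part $2$ of Theorem \ref{thm:lin_data} applied to the subnetwork then furnishes an unbounded connected subset of $\Phi^{-1}(\alpha)$ containing $\theta^c$, forcing $C$ itself to be unbounded.

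The main obstacle I expect is the swap step (iv): one must verify that the linear interpolation of $W_2$ really does keep $F_1 W_2$ invariant and that $M$ is well defined, which crucially depends on both $F_1(:, \mathcal{I})$ and $F_1(:, \mathcal{I}^*)$ being simultaneously rank $N$ after step (iii) -- this is exactly where the $2N$ threshold (enough room for two disjoint rank-$N$ blocks of columns in $F_1$) is forced. A secondary technical issue is adapting the measure/perturbation argument of Theorem \ref{thm:no_bad_valleys}, which was stated for components of the open strict sublevel set $L_\alpha^s$, to components of the possibly non-open sets $L_\alpha$ and $\Phi^{-1}(\alpha)$; this may require a preliminary descent (or approximation) step when $\Phi(\theta) = \alpha$ exactly before the measure argument can be invoked.
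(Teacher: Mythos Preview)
Your overall strategy---canonicalize the first layer via loss-invariant paths, then invoke Theorem~\ref{thm:lin_data} on the subnetwork from layer $2$ to $L$---is exactly the paper's. The paper packages your steps (ii)--(v) into Lemma~\ref{lem:full_rank_F} (a loss-invariant path to full-rank $F_1$) and Lemma~\ref{lem:equalization2N} (a loss-invariant path driving $(W_1,b_1)$ to a prescribed target via the same two-block swap you describe), and then matches $\theta$'s first layer directly to $\theta'$'s rather than to a universal canonical; that difference is cosmetic. Your unboundedness argument via part~2 of Theorem~\ref{thm:lin_data} is also fine, though the paper does it more directly by scaling inactive first-layer columns after a single application of Lemma~\ref{lem:FW}.

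The one genuine weak spot is step (i). The perturbation argument from the proof of Theorem~\ref{thm:no_bad_valleys} really needs an open neighborhood to move in, and neither $L_\alpha$ nor $\Phi^{-1}(\alpha)$ is open; your proposed ``preliminary descent'' does not help for level sets (one cannot leave the level $\alpha$ at all) and is circular for sublevel sets (descending from $\theta$ presupposes exactly the kind of path you are trying to construct). Fortunately step (i) is unnecessary: your own steps (ii)--(iii) already yield a \emph{loss-invariant} route to $\rank(F_1)=N$. If initially $\rank(F_1)=r<N$, take $|\mathcal{I}|=r$ in (ii); then $|\bar{\mathcal{I}}|=n_1-r\geq N$, so (iii) can still plant a rank-$N$ canonical block $F_1^*(:,\mathcal{I}^*)$ inside $\bar{\mathcal{I}}$, and (iv) goes through with the rectangular $M=F_1(:,\mathcal{I}^*)^{-1}F_1(:,\mathcal{I})\in\RR^{N\times r}$. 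This is precisely the content of the paper's Lemma~\ref{lem:full_rank_F}, and it is the correct replacement for the perturbation argument.
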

Theorem \ref{thm:connected_sublevel_sets} shows a stronger property than Theorem \ref{thm:no_bad_valleys} 
as it not only implies that the loss has no bad local valleys but also that there is a unique valley.
As a result, all finite global minima of the loss (whenever they exist, e.g. for square loss) are automatically connected.
This can be seen as the generalization of \cite{Venturi2018} from one hidden layer networks and square loss to 
arbitrary deep nets and general convex loss functions.
Interestingly, some other recent work \cite{DraxlerEtal2018,GaripovEtal2018} have empirically shown that
different global minima of several existing CNN architectures can be connected by a continuous path 
on which the loss function has similar values.
While our current results are not directly applicable to these models,
we consider this as a stepping stone for such extensions in future work.
Lastly, the unboundedness property of level sets of $\Phi$
as shown in the second statement of Theorem \ref{thm:connected_sublevel_sets}
implies that the unique global valley of $\Phi$ in this case is unbounded, and $\Phi$ has no strict local extrema.
The proof of Theorem \ref{thm:connected_sublevel_sets} rests upon the following lemmas.
\begin{lemma}\label{lem:full_rank_F}
    Let $(X,W,b,V)\in\RR^{N\times d}\times\RR^{d\times n}\times\RR^n\times\RR^{n\times p}.$
    Let $\sigma:\RR\to\RR$ satisfy Assumption \ref{ass:act2}.
    Suppose that $n\geq N$ and $X$ has distinct rows.
    Let $Z=\sigma(XW+\ones_Nb^T)\,V.$
    There is a continuous curve $c:[0,1]\to\RR^{d\times n}\times\RR^n\times\RR^{n\times p}$ 
    with $c(\lambda)=(W(\lambda),b(\lambda),V(\lambda))$ satisfying:
    \begin{enumerate}
	\item $c(0)=(W,b,V).$
	\item $\sigma\Big(XW(\lambda))+\ones_Nb(\lambda)^T\Big)\,V(\lambda)=Z,\,\forall\,\lambda\in[0,1].$
	\item $\rank\Big(\sigma\Big(XW(1)+\ones_Nb(1)^T\Big)\Big)=N.$
    \end{enumerate}
\end{lemma}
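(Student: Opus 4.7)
The plan is to induct on the rank deficiency $N - \rank(F)$ of $F \bydef \sigma(XW + \ones_N b^T)$: the curve $c$ will be built by concatenating one initial ``decoupling'' segment with a sequence of pairs of straight-line segments, each pair raising $\rank(F)$ by exactly one while preserving the identity $FV = Z$. If $\rank(F) = N$ already then $c$ can be taken constant and there is nothing to do.

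Otherwise let $r = \rank(F) < N \le n$, let $\mathcal{I}$ index a maximal linearly independent set of columns of $F$, and let $\bar{\mathcal{I}}$ denote the remaining $n - r$ columns. The first segment applies Lemma~\ref{lem:FW} to $(F, V, \mathcal{I})$, moving $V$ along a continuous curve to a matrix whose rows at $\bar{\mathcal{I}}$ are zero, while $FV = Z$ is preserved throughout. The key consequence of this decoupling is that any subsequent modification of $W_{:, j}$ or $b_j$ for $j \in \bar{\mathcal{I}}$ leaves $Z$ unchanged, since those columns of $F$ are multiplied by zero rows of $V$ in the product $FV$.

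Now I boost the rank one column at a time. At each step I pick some $j \in \bar{\mathcal{I}}$ still outside the currently chosen set and update $(W_{:, j}, b_j)$ via two straight-line segments. First I slide $W_{:, j}$ to a point $w^\star \in \RR^d$ for which $X w^\star$ has pairwise distinct entries; such $w^\star$ form the complement of finitely many hyperplanes in $\RR^d$ (the rows of $X$ are distinct), so they are reachable by a straight line from the current $W_{:, j}$. Then Lemma~\ref{lem:span} tells me that $\Setbar{\sigma(X w^\star + b \ones_N)}{b \in \RR}$ spans $\RR^N$, and since the span of the currently chosen columns of $F$ has dimension strictly less than $N$, some $b^\star \in \RR$ yields $\sigma(X w^\star + b^\star \ones_N) \notin \mathrm{col}(F(:, \mathcal{I}))$. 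Sliding $b_j$ to $b^\star$ along a line produces a new $F$ of rank $r+1$, with $Z$ unchanged.

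The main obstacle I expect is bookkeeping rather than any deep issue. I must verify that the zero-row structure of $V$ introduced by Lemma~\ref{lem:FW} persists across iterations, so that no fresh decoupling is ever required; this is automatic because the only updates between rank boosts are to $W_{:, j}$ and $b_j$, which leave $V$ alone, and the index $j$ just freshly added to $\mathcal{I}$ still has a zero row in $V$, which is harmless. The assumption $n \ge N$ guarantees $|\bar{\mathcal{I}}| = n - r \ge N - r$, so the procedure can be iterated $N - r$ times until $\rank(F) = N$. Concatenating the Lemma~\ref{lem:FW} segment with the $N - r$ pairs of straight-line segments and reparametrizing to $[0,1]$ gives the curve $c$ required by the statement.
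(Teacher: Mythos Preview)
Your proposal is correct and follows essentially the same route as the paper's proof: apply Lemma~\ref{lem:FW} to decouple $V$ from the redundant columns $\bar{\mathcal{I}}$, then use the distinct-rows hypothesis together with Lemma~\ref{lem:span} to fill in $W(:,\bar{\mathcal{I}})$ and $b(\bar{\mathcal{I}})$ one index at a time so that the rank climbs to $N$. Your write-up is slightly more explicit about the continuity of the path and the persistence of the zero-row structure in $V$ across iterations, but there is no substantive difference in strategy.
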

\ifpaper
\begin{proof}
    Let $F=\sigma(XW+\ones_Nb^T)\in\RR^{N\times n}.$
    If $F$ already has full rank then we are done.
    Otherwise let $r=\rank(F)<N\leq n.$
    Let $\mathcal{I}$ denote a set of column indices of $F$ so that $\rank(F(:,\mathcal{I}))=r$
    and $\bar{\mathcal{I}}$ the remaining columns.
    By applying Lemma \ref{lem:FW} to $(F,V,\mathcal{I}),$
    we can find a continuous path $V(\lambda)$ so that we will arrive at some point 
    where $FV(\lambda)$ is invariant on the path and it holds at the end point of the path that 
    $FV$ is independent of $F(:,\bar{\mathcal{I}}).$
    This means that we can arbitrarily change the values of $W(:,\bar{\mathcal{I}})$ and $b(\bar{\mathcal{I}})$ 
    without affecting the value of $Z$, 
    because any changes of these variables are absorbed into $F(:,\bar{\mathcal{I}})$ which anyway has no influence on $FV.$
    Thus it is sufficient to show that there exist $W(:,\bar{\mathcal{I}})$ and $b(\bar{\mathcal{I}})$ for which $F$ has full rank.
    Let $p=n-r$ and $\bar{\mathcal{I}}=\Set{j_1,\ldots,j_p}.$ 
    Let $A=XW$ then $A(:,\bar{\mathcal{I}})\bydef [a_{j_1},\ldots,a_{j_p}]=XW(:,\bar{\mathcal{I}}).$
    By assumption $X$ has distinct rows, one can choose $W(:,\bar{\mathcal{I}})$ so that 
    each $a_{j_k}\in\RR^N$ has distinct entries.
    Then we have
    \begin{align*}
	F(:,\bar{\mathcal{I}})=[\sigma(a_{j_1}+\ones_N b_{j_1}),\ldots,\sigma(a_{j_p}+\ones_N b_{j_p})].
    \end{align*}
    Let $\mathrm{col}(\cdot)$ denotes the column space of a matrix.
    It holds $\dim(\mathrm{col}(F(:,\mathcal{I})))=r<N.$
    Since $a_{j_1}$ has distinct entries, Lemma \ref{lem:span} implies that
    there must exist $b_{j_1}\in\RR$ so that $\sigma(a_{j_1}+\ones_N b_{j_1})\notin\mathrm{col}(F(:,\mathcal{I})),$
    because otherwise $\Span\Setbar{\sigma(a_{j_1}+\ones_N b_{j_1})}{b_{j_1}\in\RR}\in\mathrm{col}(F(:,\mathcal{I}))$ 
    whose dimension is strictly smaller than $N$, which contradicts Lemma \ref{lem:span}. 
    So it means that there is $b_{j_1}\in\RR$ so that $\rank(F)$ increases by $1.$
    By assumption $n\geq N,$ it follows that $p\geq N-r,$ 
    and thus we can choose $\Set{b_{j_2},\ldots,b_{j_{N-r}}}$ similarly to obtain $\rank(F)=N.$
\end{proof}
\fi
\begin{lemma}\label{lem:equalization2N}
    Let $(X,W,V,W')\in\RR^{N\times d}\times\RR^{d\times n}\times\RR^{n\times p}\times\RR^{d\times n}.$
    Let $\sigma:\RR\to\RR$ satisfy Assumption \ref{ass:act2}.
    Suppose that $n\geq 2N$ and $\rank(\sigma(XW))=N, \rank(\sigma(XW'))=N.$
    Then there is a continuous curve $c:[0,1]\to\RR^{d\times n}\times\RR^{n\times p}$
    with $c(\lambda)=(W(\lambda),V(\lambda))$ which satisfies the following:
    \begin{enumerate}
	\item $c(0)=(W,V).$ 
	\item $\sigma(XW(\lambda))\,V(\lambda)=\sigma(XW)\,V,\quad\forall\,\lambda\in[0,1].$
	\item $W(1)=W'.$
    \end{enumerate}
\end{lemma}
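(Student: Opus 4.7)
The plan is to continuously deform $(W,V)$ into $(W',V^*)$ for some $V^*$, maintaining the product $\sigma(XW)V$ at its initial value $Z$ throughout. The main tool is Lemma \ref{lem:FW}: given a basis $\mathcal{I}\subseteq[n]$ of the columns of $\sigma(XW)$, it continuously deforms $V$ (with $W$ fixed) to zero out the rows of $V$ outside $\mathcal{I}$, which then frees us to modify the columns $W(:,\bar{\mathcal{I}})$ without affecting the product. The assumption $n\geq 2N$ provides room for a disjoint ``buffer'' subset $\mathcal{J}\subseteq\bar{\mathcal{I}}$ of size $N$ into which we can duplicate the basis columns.

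Concretely, pick $\mathcal{I}\subseteq[n]$ a basis of the columns of $\sigma(XW)$, and $\mathcal{J}\subseteq\bar{\mathcal{I}}$ of size $N$ chosen to also be a basis of the columns of $\sigma(XW')$. First, apply Lemma \ref{lem:FW} to concentrate $V$'s support on $\mathcal{I}$. Next, for $k=1,\ldots,N$, linearly interpolate $W(:,j_k)$ from its current value to $W(:,i_k)$; this is allowed because the relevant rows of $V$ are zero, and afterwards $\mathcal{J}$ is a basis of the updated $\sigma(XW)$ with columns identical to those on $\mathcal{I}$. Apply Lemma \ref{lem:FW} again to shift $V$'s support from $\mathcal{I}$ onto $\mathcal{J}$, then linearly interpolate $W(:,i)$ to $W'(:,i)$ for every $i\in\mathcal{I}\cup(\bar{\mathcal{I}}\setminus\mathcal{J})$. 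To finally correct $W(:,\mathcal{J})$, reverse the duplication trick: change $W(:,\mathcal{I})$ (currently $W'(:,\mathcal{I})$) to duplicates of the current $W(:,\mathcal{J})$, shift $V$'s support back to $\mathcal{I}$ via Lemma \ref{lem:FW}, linearly interpolate $W(:,\mathcal{J})$ to $W'(:,\mathcal{J})$, and then---using that $\sigma(XW')(:,\mathcal{J})$ has rank $N$ by the choice of $\mathcal{J}$---shift $V$'s support onto $\mathcal{J}$ and linearly interpolate $W(:,\mathcal{I})$ to $W'(:,\mathcal{I})$.

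The main obstacle is the existence of a disjoint pair $(\mathcal{I},\mathcal{J})$ with the required basis properties, which is a matroid basis-packing problem for the two ``column matroids'' of $\sigma(XW)$ and $\sigma(XW')$. The condition $n\geq 2N$ together with the full-rank hypotheses is intended to supply this via Edmonds' matroid union inequality; in the degenerate cases where strictly disjoint bases fail to exist (because $\sigma(XW)$ and $\sigma(XW')$ have highly overlapping column supports), one handles this either by a preliminary continuous deformation of $W$ to a generic configuration or by observing that such overlap directly permits a straightforward linear interpolation of $W$ to $W'$ along which $\sigma(XW(\lambda))$ keeps full row rank. Modulo this combinatorial input, the scheme above yields a continuous curve $c(\lambda)=(W(\lambda),V(\lambda))$ with $c(0)=(W,V)$, constant product $\sigma(XW(\lambda))V(\lambda)=Z$, and $W(1)=W'$, as required.
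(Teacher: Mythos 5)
Your outline hinges on a combinatorial premise that the hypotheses do not supply: a pair of \emph{disjoint} index sets $\mathcal{I},\mathcal{J}\subseteq[n]$ with $\mathcal{I}$ a column basis of $\sigma(XW)$ and $\mathcal{J}$ a column basis of the fixed matrix $\sigma(XW')$. This can fail even with $n\ge 2N$: take both $\sigma(XW)$ and $\sigma(XW')$ to carry all their rank on columns $1,\dots,N$, with the remaining $n-N\ge N$ columns identically zero (set the last $n-N$ columns of $W$ and $W'$ to $0$ for Leaky-ReLU, since $\sigma(0)=0$; or pick them so that $Xw_i\le 0$ for ReLU). Then each matrix has the unique column basis $\{1,\dots,N\}$, so no disjoint pair exists, and the matroid-union inequality you appeal to confirms the obstruction rather than removing it: with $S$ the set of zero-column indices, $|[n]\setminus S|+r_1(S)+r_2(S)=N+0+0<2N$. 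Neither of your fallback sketches is worked out or obviously correct: a preliminary ``deformation of $W$ to a generic configuration'' changes $F=\sigma(XW)$ but leaves $F'$ and its unique basis untouched, and the claim that overlapping bases let the straight line from $W$ to $W'$ keep $\rank\,\sigma(XW(\lambda))=N$ is false in general for piecewise-linear $\sigma$.

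The paper's argument never asks for disjoint bases. It fixes bases $I$ of $F=\sigma(XW)$ and $I'$ of $F'=\sigma(XW')$, allows them to overlap, splits $I'=K\cup J$ with $K=I'\cap\bar I$ and $J=I'\cap I$, and runs three rounds of the ``zero-out-$V$-then-rewrite-$W$'' move from Lemma~\ref{lem:FW}: first zero $V$ outside $I$ and copy $W'(:,I')$ into $\bar I$ with the $K$-columns placed at their own indices (this is the only place $n\ge 2N$, i.e.\ $|\bar I|\ge N$, is used); then zero $V$ outside $\bar I$ and set $W(:,J)\leftarrow W'(:,J)$, so that $W(:,I')=W'(:,I')$; finally zero $V$ outside $I'$ and set $W(:,\bar{I'})\leftarrow W'(:,\bar{I'})$. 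Reworking your proof around this explicit overlap bookkeeping, rather than assuming the bases can be chosen disjoint, is what closes the gap.
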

\ifpaper
\begin{proof}	
    We need to show that there is a continuous path from $(W,V)$ to $(W',V')$ for some $V'\in\RR^{n\times p},$
    so that the output function, defined by $Z\bydef\sigma(XW)V,$ is invariant along the path.
    Let $F=\sigma(XW)\in\RR^{N\times n}$ and $F'=\sigma(XW').$
    It holds $Z=FV.$
    Let $I$ resp. $I'$ denote the maximum subset of linearly independent columns of $F$ resp. $F'$ 
    so that $\rank(F(:,I))=\rank(F(:,I'))=N,$
    and $\bar{I}$ and $\bar{I'}$ be their complements.
    By the rank condition, we have $|I|=|I'|=N.$
    Since $\rank(F)=N<n,$ we can apply Lemma \ref{lem:FW} to the tuple $(F,V,I)$
    to arrive at some point where the output $Z$ is independent of $F(:,\bar{I}).$
    From here, we can update $W(:,\bar{I})$ arbitrarily so that it does not affect $Z$
    because any change to these weights only lead to changes on $F(:,\bar{I})$ which however has no influence on $Z.$
    So by taking a direct line segment from the current value of $W(:,\bar{I})$ to $W'(:,I'),$ 
    we achieve $W(:,\bar{I})=W'(:,I').$
    We refer to this step below as a copy step.
    Note here that since $n\geq 2N$ by assumption, we must have $|\bar{I}|\geq|I'|.$
    Moreover, if $|\bar{I}|>|I'|$ then we can simply ignore the redundant space in $W(:,\bar{I}).$
    
    Now we already copy $W'(:,I')$ into $W(:,\bar{I}),$
    so it holds that $\rank(F(:,\bar{I}))=\rank(F'(:,I'))=N.$
    Let $K=I'\cap\bar{I}$ and $J=I'\cap I$ be disjoint subsets so that $I'=K\cup J.$ 
    Suppose w.l.o.g. that the above copy step has been done in such a way that $W(:,\bar{I}\cap I')=W'(:,K).$
    Now we apply Lemma \ref{lem:FW} to $(F,V,\bar{I})$ to arrive at some point
    where $Z$ is independent of $F(:,I),$
    and thus we can easily obtain $W(:,I\cap I')=W'(:,J)$ by taking a direct line segment between these weights.
    So far, all the rows of $W'(:,K\cup J)$ have been copied into $W(:,I')$ at the right positions
    so we obtain that $W(:,I')=W'(:,I').$
    It follows that $\rank(F(:,I'))=\rank(F'(:,I'))=N$ and 
    thus we can apply Lemma \ref{lem:FW} to $(F,V,I')$ to arrive at some other point
    where $Z$ is independent of $F(:,\bar{I'}).$
    From here we can easily obtain $W(:,\bar{I'})=W'(:,\bar{I'})$ by taking a direct line segment between these variables.
    Till now we already have $W=W'.$
    Moreover, all the paths which we have followed leave the output $Z$ invariant.
\end{proof}
\fi

\subsection{Proof of Theorem \ref{thm:connected_sublevel_sets}}
\begin{proof1}Let $\theta=(W_l,b_l)_{l=1}^L,\theta'=(W_l',b_l')_{l=1}^L$ be arbitrary points in some sublevel set $L_\alpha.$
    It is sufficient to show that there is a connected path between $\theta$ and $\theta'$
    on which the loss is not larger than $\alpha.$
    The output at the first layer is given by
    \begin{align*}
	F_1(\theta)&=\sigma([X,\ones_N][W_1^T,b_1]^T),\\
	F_1(\theta')&=\sigma([X,\ones_N][W_1'^T,b_1']^T).
    \end{align*}
    First, by applying Lemma \ref{lem:full_rank_F} to $(X,W_1,b_1,W_2),$ 
    we can assume that $F_1(\theta)$ has full rank, 
    because otherwise there is a continuous path starting from $\theta$ to some other point
    where the rank condition is fulfilled and the loss is invariant on the path,
    and so we can reset $\theta$ to this new point.
    Similarly, we can assume that $F_1(\theta')$ has full rank.
    
    Next, by applying Lemma \ref{lem:equalization2N} to the tuple $\Big([X,\ones_N],[W_1^T,b_1]^T,W_2,[W_1'^T,b_1']^T\Big),$
    and using the similar argument as above, we can drive $\theta$ to some other point where 
    the parameters of the first hidden layer agree with the corresponding values of $\theta'.$
    So we can assume w.l.o.g. that $(W_1,b_1)=(W_1',b_1').$
    Note that at this step we did not modify $\theta'$ but $\theta$ and thus $F_1(\theta')$ still has full rank.
    
    Once the first hidden layer of $\theta$ and $\theta'$ coincide, 
    one can view the output of this layer, say $F_1\bydef F_1(\theta)=F_1(\theta')$ with $\rank(F_1)=N$, 
    as the new training data for the subnetwork from layer $1$ till layer $L$ (given that $(W_1,b_1)$ is fixed).
    This subnetwork and the new data  $F_1$ satisfy all the conditions of Theorem \ref{thm:lin_data}, 
    and so it follows that the loss $\Phi$ restricted to this subnetwork has connected sublevel sets,
    which implies that there is a connected path between $(W_l,b_l)_{l=2}^L$ and $(W_l',b_l')_{l=2}^L$
    on which the loss is not larger than $\alpha.$
    This indicates that there is also a connected path between $\theta$ and $\theta'$  in $L_\alpha$ and so $L_\alpha$ must be connected.
    
    To show that every level set component of $\Phi$ is unbounded, let $\theta\in\Omega$ be an arbitrary point.
    Denote $F_1=F_1(\theta)$ and let $\mathcal{I}\subset\Set{1,\ldots,N}$ be such that $\rank(F_1(:,\mathcal{I}))=\rank(F_1).$
    Since $\rank(F_1)\leq\min(N,n_1)<n_1,$
    we can apply Lemma \ref{lem:FW} to the tuple $(F_1,W_2,\mathcal{I})$ to find a continuous path $W_2(\lambda)$ 
    which drives $\theta$ to some other point 
    where the output at 2nd layer $F_1 W_2$ is independent of $F_1(:,\bar{\mathcal{I}}).$
    Note that the network output at 2nd layer is invariant on this path and hence the entire path belongs to the same level set component with $\theta.$
    From that point, one can easily scale $(W_1(:,\bar{\mathcal{I}}),b_1(\bar{\mathcal{I}}))$ to arbitrarily large values without affecting the output.
    Since this path has constant loss and is unbounded, it follows that every level set component of $\Phi$ is unbounded.
\end{proof1}
\section{Extension to Other Activation Functions}
One way to extend our results from the previous sections to other activation functions such as ReLU and exponential linear unit
is to remove Assumption \ref{ass:act1} from the previous theorems, as shown next.
\begin{theorem}\label{thm:ReLU}
    Let $K=\min\Set{n_1,\ldots,n_{L-1}}.$
    Then all the following hold under Assumption \ref{ass:act2}:
    \begin{enumerate}
	\item If $K \geq N$ then the loss $\Phi$ has no bad local valleys.
	\item If $K \geq 2N$ then every sublevel set of $\Phi$ is connected.
    \end{enumerate}
\end{theorem}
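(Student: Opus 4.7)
The plan is to bootstrap the width-based tools developed in the previous section (Lemma \ref{lem:full_rank_F} and Lemma \ref{lem:equalization2N}), neither of which invokes the strict monotonicity Assumption \ref{ass:act1}, by exploiting the fact that now \emph{every} hidden layer is wide. Since we can no longer invert $\sigma$, the canonical-form construction of Theorem \ref{thm:no_bad_valleys} and Theorem \ref{thm:connected_sublevel_sets} is unavailable; in exchange, the uniform lower bound $n_k \geq N$ at every hidden layer lets us lift each feature matrix $F_k$ to full row rank $N$ layer by layer, propagating forward from the training data.

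For the first statement, I would fix a local valley $C \subseteq L_\alpha^s$, pick any $\theta \in C$, and induct on $k = 1,\ldots,L-1$. At step $k$, with $F_0\bydef X$, apply Lemma \ref{lem:full_rank_F} to the tuple $(F_{k-1},W_k,b_k,W_{k+1})$ to follow a continuous path along which the product $F_k W_{k+1}$ (hence the entire network output $F_L$, hence the loss) is invariant, while the new $F_k$ at the path endpoint has rank $N$. To invoke the lemma at step $k$ I need $F_{k-1}$ to have pairwise distinct rows: this holds for $k=1$ by Assumption \ref{ass:data}, and for $k \geq 2$ because full row rank $N$ (established at the previous step) automatically implies distinct rows. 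After $L-1$ steps we reach a point $\theta^\ast \in C$ with $\rank(F_{L-1}(\theta^\ast)) = N$, so $[F_{L-1}(\theta^\ast), \ones_N]$ has full row rank and any target $\hat Y \in \RR^{N \times m}$ is realizable by some $(W_L^\ast, b_L^\ast)$. Choosing $\hat Y$ with $\varphi(\hat Y)$ arbitrarily close to $p^\ast$ and interpolating linearly in the last layer, convexity of $\varphi$ together with $F_L(\lambda) = (1-\lambda)F_L(\theta^\ast) + \lambda \hat Y$ keeps the loss strictly below $\alpha$, so the whole segment stays in $C$ and the infimum of $\Phi$ on $C$ equals $p^\ast$.

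For the second statement, given $\theta,\theta' \in L_\alpha$, I would process the hidden layers $k=1,\ldots,L-1$ inductively: at each step (i) apply Lemma \ref{lem:full_rank_F} at both $\theta$ and $\theta'$ to make $F_k$ full rank at each endpoint (using $n_k \geq N$), then (ii) apply a bias-augmented version of Lemma \ref{lem:equalization2N}, obtained by replacing $X$ by $[F_{k-1},\ones_N]$ and the weight by $[W_k^T,b_k]^T$, to drive $(W_k,b_k)$ at $\theta$ onto $(W_k',b_k')$ at $\theta'$ (using $n_k \geq 2N$). Both operations keep the loss constant, and after step $k$ the first $k$ hidden layers of the two walkers coincide, so $F_k(\theta)=F_k(\theta')$, which is the precondition for the next iteration. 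After the $(L-1)$-st step all hidden layers agree; connecting the two residual points by the straight-line segment in $(W_L,b_L)$, convexity of $\varphi$ bounds the loss along the segment by $\max(\Phi(\theta),\Phi(\theta')) \leq \alpha$, proving $L_\alpha$ is path-connected.

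The main technical obstacle is the bookkeeping across the induction: Lemma \ref{lem:full_rank_F} also modifies $W_{k+1}$ in addition to $(W_k,b_k)$, but only to preserve the product $F_k W_{k+1}$, which is exactly what the next iteration needs to see as the output it must keep invariant, so the chain of applications is consistent. The remaining ingredient is the straightforward bias-augmented restatement of Lemma \ref{lem:equalization2N}, whose proof carries over verbatim once one observes that $[F_{k-1},\ones_N]$ still has distinct rows (immediate from full rank of $F_{k-1}$) and that the rank-$N$ hypothesis on $\sigma([F_{k-1},\ones_N][W_k^T,b_k]^T)$ is precisely what step (i) supplies. The span Lemma \ref{lem:span} is what replaces the use of $\sigma^{-1}$ here, and it continues to hold under Assumption \ref{ass:act2} alone.
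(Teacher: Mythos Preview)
Your proposal is correct and follows essentially the same approach as the paper: iterate Lemma \ref{lem:full_rank_F} layer by layer to obtain $\rank(F_{L-1})=N$ and then interpolate in the output layer for part 1, and for part 2 alternate Lemma \ref{lem:full_rank_F} with the bias-augmented application of Lemma \ref{lem:equalization2N} to synchronize the hidden layers one at a time before connecting via a convex segment in $(W_L,b_L)$. Your bookkeeping remarks (full rank $\Rightarrow$ distinct rows, and the $W_{k+1}$-modification being harmless) are exactly the consistency checks the paper relies on.
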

\ifpaper
\begin{proof}
    \underline{Case 1}: $\min \Set{n_1,\ldots,n_{L-1}} \geq N.$
    	Let $\theta=(W_l,b_l)_{l=1}^L$ be an arbitrary point of some strict sublevel set $L_\alpha^s,$ for some $\alpha>p^*.$
	We will show that there is a continuous descent path starting from $\theta$ 
	on which the loss is non-increasing and gets arbitrarily close to $p^*.$
	Indeed, for every $\epsilon$ arbitrarily close to $p^*$ and $\epsilon\leq\alpha,$
	let $\hat{Y}\in\RR^{N\times m}$ be such that $\varphi(\hat{Y})\leq\epsilon.$
	Since $X$ has distinct rows, $n_1\geq N,$
	and the activation $\sigma$ satisfies Assumption \ref{ass:act2},
	an application of Lemma \ref{lem:full_rank_F} to $(X,W_1,b_1,W_2)$ shows that
	there is a continuous path with constant loss which leads $\theta$ to some other point where the output at the first hidden layer is full rank.
	So we can assume w.l.o.g. that it holds for $\theta$ that $\rank(F_1)=N.$
	By assumption $n_1\geq N$ and $F_1\in\RR^{N\times n_1}$, it follows that $F_1$ must have distinct rows,
	and thus by applying Lemma \ref{lem:full_rank_F} again to $(F_1,W_2,b_2,W_3)$
	we can assume w.l.o.g. that $\rank(F_2)=N.$
	By repeating this argument to higher layers using our assumption on the width, 
	we can eventually arrive at some $\theta=(W_l,b_l)_{l=1}^L$ where $\rank(F_{L-1})=N.$
	Thus there must exist $W_{L-1}^*\in\RR^{n_{L-1}\times m}$ so that $F_{L-1}W_L^*=\hat{Y}-\ones_Nb_L^T.$ 
	Consider the line segment $W_L(\lambda)=(1-\lambda)W_L+\lambda W_L^*,$
	then it holds by convexity of $\varphi$ that
	\begin{align*}
	    &\Phi\Big((W_l,b_l)_{l=1}^{L-1}, (W_L(\lambda),b_L)\Big)\\
	    =&\varphi\Big(F_{L-1} W_L(\lambda)+\ones_Nb_L^T\Big)\\
	    =&\varphi\Big((1-\lambda)(F_{L-1}W_L+\ones_Nb_L^T) + \lambda(F_{L-1}W_L^*+\ones_Nb_L^T)\Big)\\
	    \leq& (1-\lambda)\varphi(F_L) + \lambda\varphi(\hat{Y})\\
	    <& (1-\lambda)\alpha + \lambda\epsilon
	    \leq \alpha.
	\end{align*}
	Thus the whole line segment is contained in $L_\alpha^s.$
	By plugging $\lambda=1$ we obtain $\Big((W_l,b_l)_{l=1}^{L-1},(W_L^*,b_L)\Big)\in L_\alpha^s.$
	Moreover, it holds $\Phi\Big((W_l,b_l)_{l=1}^{L-1},(W_L^*,b_L)\Big)=\varphi(\hat{Y})\leq\epsilon.$
	As $\epsilon$ can be chosen arbitrarily close to $p^*,$
	we conclude that $\Phi$ can be made arbitrarily close to $p^*$ in every strict sublevel set 
	which implies that $\Phi$ has no bad local valleys.
	
    \underline{Case 2}: $\min \Set{n_1,\ldots,n_{L-1}} \geq 2N.$
	Our first step is similar to the first step in the proof of Theorem \ref{thm:connected_sublevel_sets},
	which we repeat below for completeness.
	Let $\theta=(W_l,b_l)_{l=1}^L,\theta'=(W_l',b_l')_{l=1}^L$ be arbitrary points in some sublevel set $L_\alpha.$
	It is sufficient to show that there is a connected path between $\theta$ and $\theta'$
	on which the loss is not larger than $\alpha.$
	In the following, we denote $F_k$ and $F_k'$ as the output at a layer $k$ for $\theta$ and $\theta'$ respectively.
	The output at the first layer is:
	\begin{align*}
	    F_1=\sigma([X,\ones_N][W_1^T,b_1]^T),\\
	    F_1'=\sigma([X,\ones_N][W_1'^T,b_1']^T).
	\end{align*}
	By applying Lemma \ref{lem:full_rank_F} to $(X,W_1,b_1,W_2)$ and $(X,W_1',b_1',W_2')$ 
	we can assume w.l.o.g. that both $F_1$ and $F_1'$ have full rank, since otherwise
	there is a continuous path starting from each point 
	and leading to some other point where the rank condition is fulfilled 
	and the network output at second layer is invariant on the path.
	Once $F_1$ and $F_1'$ have full rank, 
	we can apply Lemma \ref{lem:equalization2N} to $\Big([X,\ones_N],[W_1^T,b_1]^T,W_2,[W_1'^T,b_1']^T\Big)$
	in order to drive $\theta$ to some other point where the parameters of the first layer are all equal to the corresponding ones of $\theta'.$
	So we can assume w.l.o.g. that $(W_1,b_1)=(W_1',b_1').$
	
	Once the network parameters of $\theta$ and $\theta'$ coincide at the first hidden layer,
	we can view the output of this layer, which is equal for both points (i.e., $F_1=F_1'$),
	as the new training data for the subnetwork from layer $2$ till layer $L.$
	Same as before, we first apply Lemma \ref{lem:full_rank_F} to $(F_1,W_2,b_2,W_3)$ and $(F_1',W_2',b_2',W_3')$ 
	to drive $\theta$ and $\theta'$ respectively to other new points where both $F_2$ and $F_2'$ have full rank.
	Note that this path only acts on $(W_2,b_2,W_3)$ and thus leaves everything else below layer $2$ invariant, in particular
	we still have $F_1=F_1'.$
	Then we can apply Lemma \ref{lem:equalization2N} again to the tuple $\Big([F_1,\ones_N],[W_2^T,b_2]^T,W_3,[W_2'^T,b_2']^T\Big)$
	to drive $\theta$ to some other point where $(W_2,b_2)=(W_2',b_2').$
	
	By repeating the above argument to the last hidden layer, 
	we can make all network parameters of $\theta$ and $\theta'$ coincide for all layers, except the output layer.
	In particular, the path that each $\theta$ and $\theta'$ has followed has invariant loss.
	The output of the last hidden layer for these points is $A\bydef F_{L-1}=F_{L-1}'.$
	The loss at these two points can be rewritten as 
	\begin{align*}
	    &\Phi(\theta)=\varphi\Big([A,\ones_N]\begin{bmatrix}W_L\\b_L^T\end{bmatrix}\Big),\\
	    &\Phi(\theta')=\varphi\Big([A,\ones_N]\begin{bmatrix}W_L'\\b_L'^T\end{bmatrix}\Big).
	\end{align*}
	Since $\varphi$ is convex, the line segment 
	$$(1-\lambda)\begin{bmatrix}W_L\\b_L^T\end{bmatrix}+\lambda \begin{bmatrix}W_L'\\b_L'^T\end{bmatrix}$$
	must yield a continuous descent path between $(W_L,b_L)$ and $(W_L',b_L'),$
	and so the loss of every point on this path cannot be larger than $\alpha.$
	Moreover, this path connects $\theta$ and $\theta'$ together, and thus $L_\alpha$ has to be connected.
\end{proof}
\fi
It is clear that the conditions of Theorem \ref{thm:ReLU} are now much stronger than that of our previous theorems 
as all the hidden layers need to be wide enough.
Nevertheless, it is worth mentioning that this kind of technical conditions (i.e. all the hidden layers are sufficiently wide) 
have also been used in recent work \cite{AllenZhuEtal2018, DuEtAl2018_GD,ZouEtal2018} 
to establish convergence of gradient descent methods to a global minimum.
From a theoretical standpoint, these results seem to suggest that Leaky-ReLU might in general lead to a much ``easier'' loss surface than ReLU.

\begin{figure*}[ht]
\begin{center}
\includegraphics[width=0.4\linewidth]{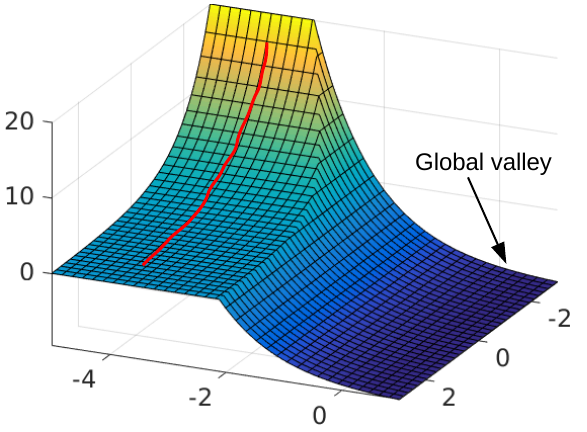}\hspace{20pt}
\includegraphics[width=0.4\linewidth]{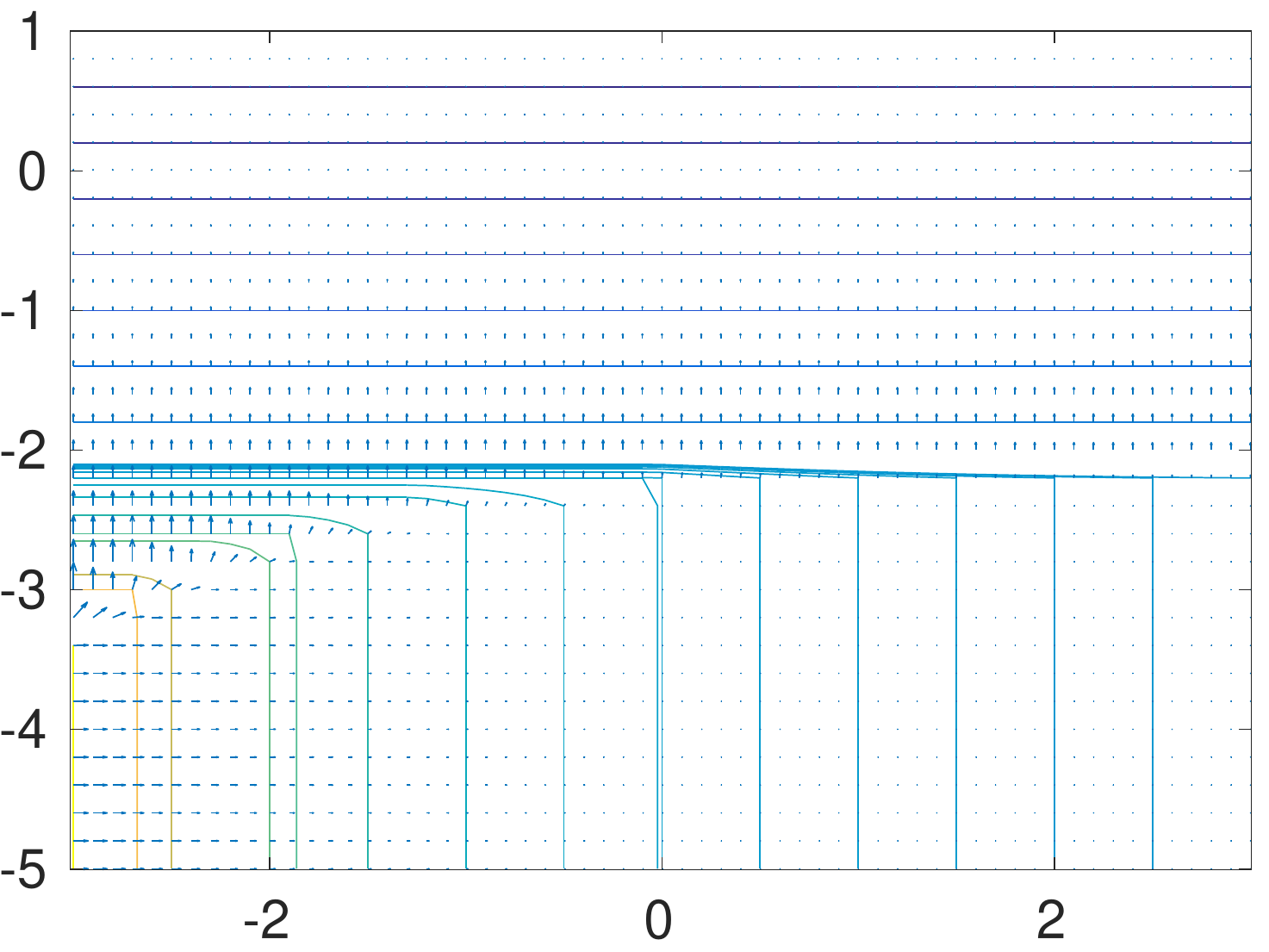}
\end{center}
\caption{Failure of gradient descent on a function with connected sublevel sets.
Left: A function with connected sublevel sets. 
Right: Contour and negative gradient map of the function on the left.
The plotted gradient flow (i.e. red curve) gets stuck infinitely as the function continues to decrease along that direction
and only reaches a suboptimal value at infinity.
Note that it also does not converge to any finite critical point 
as the function decreases exponentially in the direction of the trajectory.
}
\label{fig:divergence}
\end{figure*}

\section{Connectivity of Sublevel Sets Does Not Ensure Success of Gradient Descent}
In the previous sections, we have shown that over-parameterization in deep neural networks essentially leads to connected sublevel sets
of the loss function, which makes it more favorable to local search algorithms such as gradient descent.
However, as mentioned in the introduction, this result in general does not guarantee convergence of gradient descent to a global minimum.
First reason is due to the potential presence of saddle points.
Second, if gradient descent has bad initialization, 
then in the worst case the algorithm can get stuck infinitely as shown in Figure \ref{fig:divergence}.

\section{Related Work}
  Many interesting theoretical results have been developed on the loss surface of neural networks
  \cite{LivniEtal2014,Choro15,HaeVid2015,Hardt2017,XieEtal2017,Yun2017,LuKawaguchi2017,Pennington2017,ZhouLiang2018,LiangICML2018,LiaEtAl2018,ZhangShaoSala2018,Nouiehed2018,LaurentBrecht2018}.
  There is also a whole line of researches studying convergence of learning algorithms in training neural networks
  \citep{Andoni2014,Sedghi2015,Janzamin15,GauNgoHei2016,Brutzkus2017,Soltanolkotabi2017,Soudry17,Tian2018,Wang2018,JiTelgarsky2019,AroraEtal2019,ZhuLiang2018,Bartlett2018,ChizatBach2018}
  and others studying generalization properties, which is however beyond the scope of this paper.

  
  The closest existing result is the work by \cite{Venturi2018} 
  who 
  show that if the number of hidden neurons is greater than the intrinsic dimension of the network,
  defined as the dimension of some function space,
  then the loss has no spurious valley, and furthermore, if the number of hidden neurons is greater than two times the intrinsic dimension then 
  every sublevel set is connected.
  The results apply to one hidden layer networks with population risk and square loss.
  As admitted by the authors in the paper, an extension of such result, in particular the notion of intrinsic dimension, to multiple layer networks
  would require the number of neurons to grow exponentially with depth.
  Prior to this, \cite{SafSha2016} showed that for a class of deep fully connected networks with common loss functions,
  there exists a continuous descent path between specific pairs of points in parameter space which satisfy certain conditions,
  and these conditions can be shown to hold with probability $1/2$ as the width of the last hidden layer goes to infinity.
  
  
  Most closely related in terms of the setting are the work by \citep{Quynh2017,Quynh2018b} who analyze
  the optimization landscape of standard deep and wide (convolutional) neural networks for multiclass problem.
  They both assume that the network has a wide hidden layer $k$ with $n_k\geq N.$
  This condition has been recently relaxed to $n_1+\ldots+n_{L-1}\geq N$ by using flexible skip-connections \cite{Quynh2018c}.
  All of these results so far require real analytic activation functions, and thus are not applicable to the class of 
  piecewise linear activations analyzed in this paper.
  Moreover, while the previous work focus on global optimality of critical points,
  this paper characterizes sublevel sets of the loss function which gives us further insights and intuition on
  the underlying geometric structure of the optimization landscape.
%
\paragraph{Conclusion.}
We have shown that every sublevel set of the training loss function of a certain class of 
deep over-parameterized neural nets is connected and unbounded.
Our results hold for standard deep fully connected networks with piecewise linear activation functions, 
and general convex losses, e.g. square loss, cross-entropy loss and multiclass Hinge-loss.
We note that the property of connected sublevel sets as shown in this paper 
is satisfied by quasi-convex functions, and intuitively, 
this kind of functions are more favorable to (stochastic) gradient descent algorithms
than other wildly non-convex functions.
On the other hand, we also show that initialization conditions are very important to ensure
convergence of gradient descent to the bottom of the loss landscape.
In particular, even when all sublevel sets are connected, the algorithm might still get stuck infinitely or converge to a saddle point
depending on where it starts.

For future work, we find it interesting to study the influence of depth 
on optimization landscape of nonlinear networks.
We know previously that depth can improve the expressive power of nonlinear neural nets
\cite{Telgarsky2016,Eldan2016}, or accelerate the optimization of linear networks \cite{AroraEtal2018}.
However, whether or not depth has any influence on the geometry of sublevel sets of the loss function
still seems to be an entirely open problem.
\paragraph{Open problem.}
\textit{Given a reasonable fixed budget on each hidden layer width (i.e. significantly less than $N$),
is it possible to make the loss function of nonlinear neural networks 
be free of bad local valleys, or to make its sublevel sets become connected, by just increasing the depth?}


\section*{Acknowledgement} 
We would like to thank the anonymous reviewers for their support and helpful comments on the paper.

\bibliography{regul}
\bibliographystyle{icml2019}

\clearpage
\appendix

\section{Proof of Lemma \ref{lem:exp_act}}
    A function $\sigma:\RR\to\RR$ is continuous piecewise linear with at least two pieces if it can be represented as
    \begin{align*}
	\sigma(x) = a_i x + b_i,\quad \forall\,x\in(x_{i-1}, x_i),\,\forall\,i\in[1,n+1].
    \end{align*}
    for some $n\geq 1, x_0=-\infty<x_1<\ldots<x_n<x_{n+1}=\infty$ and $(a_i,b_i)_{i=1}^{n+1}.$
    We can assume that all the linear pieces agree at their intersection
    and there are no consecutive pieces with the same slope: $a_i\neq a_{i+1}$ for every $i\in[1,n].$
    Suppose by contradiction that $\sigma$ does not satisfy Assumption \ref{ass:act2},
    then there are non-zero coefficients $(\lambda_i,y_i)_{i=1}^m$ with $y_i\neq y_j (i\neq j)$ such that 
    $\sigma(x)=\sum_{i=1}^m\lambda_i\sigma(x-y_i)$ for every $x\in\RR.$
    We assume w.l.o.g. that $y_1<\ldots<y_m.$
    
    \underline{Case 1: $y_1>0.$}
    For every $x\in(-\infty, x_1)$ we have $\sigma(x)=a_1x+b_1=\sum_{i=1}^m\lambda_i(a_1(x-y_i)+b_1)$
    and thus by comparing the coefficients on both sides we obtain 
	$\sum_{i=1}^m\lambda_i a_1=a_1.$
    Moreover, for every $x\in\big(x_1,\min(x_1+y_1,x_2)\big)$ it holds $\sigma(x)=a_2x+b_2=\sum_{i=1}^m\lambda_i(a_1(x-y_i)+b_1)$
    and so
	$\sum_{i=1}^m\lambda_i a_1=a_2.$
    Thus $a_1=a_2,$ which is a contradiction.
    
    \underline{Case 2: $y_1<0.$}
    By definition, for $x\in(-\infty, x_1+y_1)$ we have $\sigma(x)=a_1x+b_1=\sum_{i=1}^m\lambda_i(a_1(x-y_i)+b_1)$
    and thus by comparing the coefficients on both sides we obtain 
    \begin{align}\label{eq:rd}
	\sum_{i=1}^m\lambda_i a_1=a_1.
    \end{align}
    For $x\in\big(x_1+y_1,\min(x_1+y_2,x_1,x_2+y_1)\big)$ 
    it holds 
    \begin{align*}
	\sigma(x)
	&=a_1x+b_1\\
	&=\lambda_1(a_2(x-y_1)+b_2)+\sum_{i=2}^m\lambda_i(a_1(x-y_i)+b_1)
    \end{align*}
    and thus by comparing the coefficients we have
    \begin{align*}
	\lambda_1 a_2+\sum_{i=2}^m\lambda_i a_1=a_1.
    \end{align*}
    This combined with \eqref{eq:rd} leads to $\lambda_1 a_1=\lambda_1 a_2,$
    and thus $a_1=a_2$ (since $\lambda_1\neq 0$) which is a contradiction.
    
    One can prove similarly for ELU \cite{ELU2016}
    $$\sigma(x)=\begin{cases}x&x\ge0\\ \alpha(e^x-1)&x<0\end{cases} \textrm{ where }\alpha>0.$$
    Suppose by contradiction that there exist non-zero coefficients $(\lambda_i,y_i)_{i=1}^m$ with $y_i\neq y_j(i\neq j)$
    such that $\sigma(x)=\sum_{i=1}^m\lambda_i\sigma(x-y_i),$ and assume w.l.o.g. that $y_1<\ldots<y_m.$
    If $y_m>0$ then for every $x\in(\max(0,y_{m-1}),y_m)$ it holds
    \begin{align*}
	&\sigma(x)=x=\lambda_m\alpha(e^{x-y_m}-1) + \sum_{i=1}^{m-1}\lambda_i(x-y_i)\\
	\Rightarrow\quad& e^x = \frac{xe^{y_m}-\sum_{i=1}^{m-1}\lambda_i(x-y_i)e^{y_m}}{\lambda_m\alpha} + e^{y_m}
    \end{align*}
    which is a contradiction since $e^x$ cannot be identical to any affine function on any open interval.
    Thus it must hold that $y_m<0.$ 
    But then for every $x\in(y_m,0)$ we have
    \begin{align*}
	&\sigma(x)=\alpha(e^x-1)=\sum_{i=1}^m\lambda_i(x-y_i)\\
	\Rightarrow\quad& e^x=\frac{1}{\alpha}\sum_{i=1}^m\lambda_i(x-y_i)+1 
    \end{align*}
    which is a contradiction for the same reason above.

\section{Proof of Proposition \ref{prop:connected_continuous_map}}
    Pick some $a,b\in f(A)$ and let $x,y\in A$ be such that $f(x)=a$ and $f(y)=b.$
    Since $A$ is connected, there is a continuous curve $r: [0,1]\to A$ so that $r(0)=x,r(1)=y.$  
    Consider the curve $f\circ r:[0,1]\to f(A),$ then it holds that $f(r(0))=a, f(r(1))=b.$  
    Moreover, $f\circ r$ is continuous as both $f$ and $r$ are continuous.
    Thus it follows from Definition \ref{def:connected_set} that $f(A)$ is a connected.

\section{Proof of Proposition \ref{prop:minkowski}}
    Let $x,y\in U+V$ then there exist $a,b\in U$ and $c,d\in V$
    such that $x=a+c,y=b+d.$
    Since $U$ and $V$ are connected sets, there exist two continuous curves $p:[0,1]\to U$
    and $q:[0,1]\to V$ such that 
    $p(0)=a,p(1)=b$ and $q(0)=c,q(1)=d.$
    Consider the continuous curve $r(t)\bydef p(t)+q(t)$ then we have $r(0)=a+c=x, r(1)=b+d=y$ and $r(t)\in U+V$ for every $t\in[0,1].$
    This implies that every two elements in $U+V$ can be connected by a continuous curve
    and thus $U+V$ must be a connected set.

\section{Proof of Lemma \ref{lem:any_to_fullrankW}}
    The idea is to make one weight matrix full rank at a time while keeping the others fixed (except the first layer).
    Each step is done by following a continuous path which leads to a new point where the rank condition is fulfilled
    while keeping the loss constant along the path.
    Each time when we follow a continuous path, we reset our starting point to the end point of the path and proceed.
    This is repeated until all the matrices $(W_l)_{l=2}^L$ have full rank.
    
    \underline{Step 1: Make $W_2$ full rank.}
    If $W_2$ has full rank then we proceed to $W_3.$
    Otherwise, let $\rank(W_2)=r<n_2<n_1.$
    Let $\mathcal{I}\subset\Set{1,\ldots,n_1}, |\mathcal{I}|=r$ 
    denote the set of indices of linearly independent rows of $W_2$ so that $\rank(W_2(\mathcal{I},:))=r.$ 
    Let $\bar{\mathcal{I}}$ denote the remaining rows of $W_2.$
    Let $E\in\RR^{(n_1-r)\times r}$ be a matrix such that $W_2(\bar{\mathcal{I}},:)=E W_2(\mathcal{I},:).$
    Let $P\in\RR^{n_1\times n_1}$ be a permutation matrix which permutes the rows of $W_2$ according to $\mathcal{I}$
    so that we can write
    \begin{align*}
	PW_2 = \begin{bmatrix}W_2(\mathcal{I},:)\\W_2(\bar{\mathcal{I}},:)\end{bmatrix}.
    \end{align*}
    We recall that $F_1(\theta)$ is the output of the network at the first layer, evaluated at $\theta$.
    Below we drop $\theta$ and just write $F_1$ as it is clear from the context.
    By construction of $P$, we have 
    $$F_1P^T=[F_1(:,\mathcal{I}),F_1(:,\bar{\mathcal{I}})].$$    
    The first step is to turn $W_1$ into a canonical form.
    In particular, the set of all possible solutions of $W_1$ which realizes the same the output $F_1$ at the first hidden layer 
    is characterized by $X^\dagger\big(\sigma^{-1}(F_1)-\ones_N b_1^T\big) + \ker(X)$
    where we denote, by abuse of notation, $\ker(X)=\Setbar{A\in\RR^{d\times n_1}}{XA=0}.$
    This solution set is connected because $\ker(X)$ is a connected set 
    and the Minkowski-sum of two connected sets is known to be connected,
    and so there exists a continuous path between every two solutions in this set on which the output $F_1$ is invariant.
    Obviously the current $W_1$ and $X^\dagger(\sigma^{-1}(F_1)-\ones_N b_1^T)$ are elements of this set,
    thus they must be connected by a continuous path on which the loss is invariant.
    So we can assume now that $W_1=X^\dagger(\sigma^{-1}(F_1)-\ones_N b_1^T).$ 
    Next, consider the curve:
    \begin{align*}	
	&W_1(\lambda)=X^\dagger\Big(\sigma^{-1}(A(\lambda))-\ones_Nb_1^T\Big), \textrm{where} \\
	&A(\lambda)=[F_1(:,\mathcal{I})+\lambda F_1(:,\bar{\mathcal{I}})E,(1-\lambda)F_1(:,\bar{\mathcal{I}})]\,P.
    \end{align*}
    This curves starts at $\theta$ since $W_1(0)=W_1$, 
    and it is continuous as $\sigma$ has a continuous inverse by Assumption \ref{ass:act1}.
    Using $XX^\dagger=\Id$, one can compute the pre-activation output (without bias term) at the second layer as
    \begin{align*}
	\sigma\big(XW_1(\lambda)+\ones_Nb_1^T\big)\, W_2 = A(\lambda)\, W_2 = F_1 W_2,
    \end{align*}
    which implies that the loss is invariant on this curve, 
    and so we can take its end point $W_1(1)$ as a new starting point:
    \begin{align*}
	&W_1=X^\dagger\Big(\sigma^{-1}(A)-\ones_Nb_1^T\Big), \textrm{where} \\
	&A=[F_1(:,\mathcal{I})+F_1(:,\bar{\mathcal{I}})E,\,\textbf{0}]\,P.
    \end{align*}
    Now, the output at second layer above, given by $AW_2$, is independent of $W_2(\bar{\mathcal{I}},:)$ because it is canceled
    by the zero component in $A.$ 
    Thus one can easily change $W_2(\bar{\mathcal{I}},:)$ so that $W_2$ has full rank while still keeping the loss invariant.
    
    \underline{Step 2: Using induction to make $W_3,\ldots,W_L$ full rank.}
    Let $\theta=(W_l,b_l)_{l=2}^L$ be our current point.
    Suppose that all the matrices $(W_l)_{l=2}^k$ already have full rank for some $k\geq 2$
    then we show below how to make $W_{k+1}$ full rank.
    We write $F_k$ to denote $F_k(\theta).$
    By the second statement of Lemma \ref{lem:canonical_form}, 
    we can follow a continuous path (with invariant loss) to drive $\theta$ to the following point:
    \begin{align}\label{eq:lem:any_to_fullrankW:theta}
	\theta\bydef\Big(h\Big((W_l,b_l)_{l=2}^k, F_k\Big),(W_l,b_l)_{l=2}^L\Big)
    \end{align}
    where $h:\Omega_2^*\times\ldots\times\Omega_k^*\times\RR^{N\times n_k}$ is the continuous map from Lemma \ref{lem:canonical_form} 
    which satisfies for every $A\in\RR^{N\times n_k},$
    \begin{align}\label{eq:lem:any_to_fullrankW:prop_h}
	F_k\Big(h\big((W_l,b_l)_{l=2}^k,A\big),(W_l,b_l)_{l=2}^k\Big)=A.
    \end{align}

    Now, if $W_{k+1}$ already has full rank then we are done, otherwise we follow the similar steps as before.
    Indeed, let $r=\rank(W_{k+1})<n_{k+1}<n_k$ and $\mathcal{I}\subset\Set{1,\ldots,n_k},|\mathcal{I}|=r$ the set of indicies of 
    $r$ linearly independent rows of $W_{k+1}$.
    Then there is a permutation matrix $P\in\RR^{n_k\times n_k}$ and some matrix $E\in\RR^{(n_k-r)\times r}$ so that
    \begin{align}\label{eq:lem:any_to_fullrankW:PW_k+1}
	\!\!\!\!PW_{k+1}\!=\!\begin{bmatrix}W_{k+1}(\mathcal{I},:)\\W_{k+1}(\bar{\mathcal{I}},:)\end{bmatrix},
	W_{k+1}(\bar{\mathcal{I}},:)\!=\!EW_{k+1}(\mathcal{I},:).
    \end{align}
    Moreover it holds 
    \begin{align}\label{eq:lem:any_to_fullrankW:FkPT}
	F_kP^T=[F_k(:,\mathcal{I}),F_k(:,\bar{\mathcal{I}})].
    \end{align}
    Consider the following curve $c:[0,1]\to\Omega$ which continuously update $(W_1,b_1)$ while keeping other layers fixed:
    \begin{align*}
	&c(\lambda) = \Big( h\Big((W_l,b_l)_{l=2}^k, A(\lambda)\Big),(W_2,b_2),\ldots,(W_L,b_L) \Big),\\
	&\textrm{where } A(\lambda)=[F_k(:,\mathcal{I})+\lambda F_k(:,\bar{\mathcal{I}})E,(1-\lambda)F_k(:,\bar{\mathcal{I}})]\,P.
    \end{align*}
    It is clear that $c$ is continuous as $h$ is continuous.
    One can easily verify that $c(0)=\theta$ by using \eqref{eq:lem:any_to_fullrankW:FkPT} and \eqref{eq:lem:any_to_fullrankW:theta}.
    The pre-activation output (without bias term) at layer $k+1$ for every point on this curve is given by
    \begin{align*}
	F_k(c(\lambda))\,W_{k+1} = A(\lambda) W_{k+1} = F_k W_{k+1},\,\;\forall\,\lambda\in[0,1],
    \end{align*}
    where the first equality follows from \eqref{eq:lem:any_to_fullrankW:prop_h} 
    and the second follows from \eqref{eq:lem:any_to_fullrankW:PW_k+1} and \eqref{eq:lem:any_to_fullrankW:FkPT}.
    As the loss is invariant on this curve, we can take its end point $c(1)$ as a new starting point:
    \begin{align*}
	&\theta \bydef \Big( h\Big((W_l,b_l)_{l=2}^k, A\Big),(W_2,b_2),\ldots,(W_L,b_L) \Big),\\
	&\textrm{where } A=[F_k(:,\mathcal{I})+F_k(:,\bar{\mathcal{I}})E,\,\textbf{0}]\,P.
    \end{align*}
    At this point, the output at layer $k+1$ as mentioned above is given by $AW_{k+1},$ which is independent of $W_{k+1}(\bar{\mathcal{I}},:)$
    since it is canceled out by the zero component in $A,$
    and thus one can easily change the submatrix $W_{k+1}(\bar{\mathcal{I}},:)$ so that $W_{k+1}$ has full rank while
    leaving the loss invariant.
    
    Overall, by induction we can make all the weight matrices $W_2,\ldots,W_L$ full rank 
    by following several continuous paths on which the loss is constant, which finishes the proof.

\section{Proof of Lemma \ref{lem:FW}}
    Let $r=\rank(F)<n.$ 
    Since $\mathcal{I}$ contains $r$ linearly independent columns of $F$,
    the remaining columns must lie on their span.
    In other words, there exists $E\in\RR^{r\times(n-r)}$ 
    so that $F(:,\bar{\mathcal{I}})=F(:,\mathcal{I})\,E.$
    Let $P\in\RR^{n\times n}$ be a permutation matrix which permutes the columns of $F$ according to $\mathcal{I}$ so that 
    we can write $F=[F(:,\mathcal{I}),F(:,\bar{\mathcal{I}})]\,P.$
    Consider the continuous curve $c:[0,1]\to\RR^{n\times p}$ defined as
    \begin{align*}
	c(\lambda)=P^T\,\begin{bmatrix}W(\mathcal{I},:)+\lambda E\,W(\bar{\mathcal{I}},:)\\ (1-\lambda)W(\bar{\mathcal{I}},:)\end{bmatrix}, \,\forall\,\lambda\in[0,1].
    \end{align*}
    It holds $c(0)=P^T\,\begin{bmatrix}W(\mathcal{I},:)\\W(\bar{\mathcal{I}},:)\end{bmatrix}=W.$
    For every $\lambda\in[0,1]:$
    \begin{align*}
	Fc(\lambda) 
	&= [F(:,\mathcal{I}),F(:,\bar{\mathcal{I}})]\,PP^T\, \begin{bmatrix}W(\mathcal{I},:)+\lambda E\,W(\bar{\mathcal{I}},:)\\ (1-\lambda)W(\bar{\mathcal{I}},:)\end{bmatrix}\\
	&= F(:,\mathcal{I}) W(\mathcal{I},:) + F(:,\bar{\mathcal{I}}) W(\bar{\mathcal{I}},:) = FW .
    \end{align*}
    Lastly, we have
    \begin{align*}
	Fc(1) 
	&= [F(:,\mathcal{I}),F(:,\bar{\mathcal{I}})]\,PP^T\,\begin{bmatrix}W(\mathcal{I},:)+ EW(\bar{\mathcal{I}},:)\\ \textbf{0}\end{bmatrix}\\
	&= F(:,\mathcal{I}) W(\mathcal{I},:) + F(:,\mathcal{I}) E\, W(\bar{\mathcal{I}},:)
    \end{align*}
    which is independent of $F(:,\bar{\mathcal{I}}).$

\section{Proof of Lemma \ref{lem:span}}
    Suppose by contradiction that $\dim(\Span(S))<n.$
    Then there exists $\lambda\in\RR^n,\lambda\neq 0$ such that $\lambda\perp\Span(S),$
    and thus it holds $\sum_{i=1}^n \lambda_i\sigma(v_i+b)=0$ for every $b\in\RR.$
    We assume w.l.o.g. that $\lambda_1\neq 0$ then it holds $$\sigma(v_1+b)=-\sum_{i=2}^n \frac{\lambda_i}{\lambda_1}\sigma(v_i+b),\quad\forall\,b\in\RR.$$
    By a change of variable, we have $$\sigma(c)=-\sum_{i=2}^n\frac{\lambda_i}{\lambda_1}\sigma(c+v_i-v_1),\quad\forall\,c\in\RR,$$
    which contradicts Assumption \ref{ass:act2}.
    Thus $\Span(S)=\RR^n.$

\section{Proof of Lemma \ref{lem:full_rank_F}}
    Let $F=\sigma(XW+\ones_Nb^T)\in\RR^{N\times n}.$
    If $F$ already has full rank then we are done.
    Otherwise let $r=\rank(F)<N\leq n.$
    Let $\mathcal{I}$ denote a set of column indices of $F$ so that $\rank(F(:,\mathcal{I}))=r$
    and $\bar{\mathcal{I}}$ the remaining columns.
    By applying Lemma \ref{lem:FW} to $(F,V,\mathcal{I}),$
    we can find a continuous path $V(\lambda)$ so that we will arrive at some point 
    where $FV(\lambda)$ is invariant on the path and it holds at the end point of the path that 
    $FV$ is independent of $F(:,\bar{\mathcal{I}}).$
    This means that we can arbitrarily change the values of $W(:,\bar{\mathcal{I}})$ and $b(\bar{\mathcal{I}})$ 
    without affecting the value of $Z$, 
    because any changes of these variables are absorbed into $F(:,\bar{\mathcal{I}})$ which anyway has no influence on $FV.$
    Thus it is sufficient to show that there exist $W(:,\bar{\mathcal{I}})$ and $b(\bar{\mathcal{I}})$ for which $F$ has full rank.
    Let $p=n-r$ and $\bar{\mathcal{I}}=\Set{j_1,\ldots,j_p}.$ 
    Let $A=XW$ then $A(:,\bar{\mathcal{I}})\bydef [a_{j_1},\ldots,a_{j_p}]=XW(:,\bar{\mathcal{I}}).$
    By assumption $X$ has distinct rows, one can choose $W(:,\bar{\mathcal{I}})$ so that 
    each $a_{j_k}\in\RR^N$ has distinct entries.
    Then we have
    \begin{align*}
	F(:,\bar{\mathcal{I}})=[\sigma(a_{j_1}+\ones_N b_{j_1}),\ldots,\sigma(a_{j_p}+\ones_N b_{j_p})].
    \end{align*}
    Let $\mathrm{col}(\cdot)$ denotes the column space of a matrix.
    It holds $\dim(\mathrm{col}(F(:,\mathcal{I})))=r<N.$
    Since $a_{j_1}$ has distinct entries, Lemma \ref{lem:span} implies that
    there must exist $b_{j_1}\in\RR$ so that $\sigma(a_{j_1}+\ones_N b_{j_1})\notin\mathrm{col}(F(:,\mathcal{I})),$
    because otherwise $\Span\Setbar{\sigma(a_{j_1}+\ones_N b_{j_1})}{b_{j_1}\in\RR}\in\mathrm{col}(F(:,\mathcal{I}))$ 
    whose dimension is strictly smaller than $N$, which contradicts Lemma \ref{lem:span}. 
    So it means that there is $b_{j_1}\in\RR$ so that $\rank(F)$ increases by $1.$
    By assumption $n\geq N,$ it follows that $p\geq N-r,$ 
    and thus we can choose $\Set{b_{j_2},\ldots,b_{j_{N-r}}}$ similarly to obtain $\rank(F)=N.$

\section{Proof of Lemma \ref{lem:equalization2N}}
    We need to show that there is a continuous path from $(W,V)$ to $(W',V')$ for some $V'\in\RR^{n\times p},$
    so that the output function, defined by $Z\bydef\sigma(XW)V,$ is invariant along the path.
    Let $F=\sigma(XW)\in\RR^{N\times n}$ and $F'=\sigma(XW').$
    It holds $Z=FV.$
    Let $I$ resp. $I'$ denote the maximum subset of linearly independent columns of $F$ resp. $F'$ 
    so that $\rank(F(:,I))=\rank(F(:,I'))=N,$
    and $\bar{I}$ and $\bar{I'}$ be their complements.
    By the rank condition, we have $|I|=|I'|=N.$
    Since $\rank(F)=N<n,$ we can apply Lemma \ref{lem:FW} to the tuple $(F,V,I)$
    to arrive at some point where the output $Z$ is independent of $F(:,\bar{I}).$
    From here, we can update $W(:,\bar{I})$ arbitrarily so that it does not affect $Z$
    because any change to these weights only lead to changes on $F(:,\bar{I})$ which however has no influence on $Z.$
    So by taking a direct line segment from the current value of $W(:,\bar{I})$ to $W'(:,I'),$ 
    we achieve $W(:,\bar{I})=W'(:,I').$
    We refer to this step below as a copy step.
    Note here that since $n\geq 2N$ by assumption, we must have $|\bar{I}|\geq|I'|.$
    Moreover, if $|\bar{I}|>|I'|$ then we can simply ignore the redundant space in $W(:,\bar{I}).$
    
    Now we already copy $W'(:,I')$ into $W(:,\bar{I}),$
    so it holds that $\rank(F(:,\bar{I}))=\rank(F'(:,I'))=N.$
    Let $K=I'\cap\bar{I}$ and $J=I'\cap I$ be disjoint subsets so that $I'=K\cup J.$ 
    Suppose w.l.o.g. that the above copy step has been done in such a way that $W(:,\bar{I}\cap I')=W'(:,K).$
    Now we apply Lemma \ref{lem:FW} to $(F,V,\bar{I})$ to arrive at some point
    where $Z$ is independent of $F(:,I),$
    and thus we can easily obtain $W(:,I\cap I')=W'(:,J)$ by taking a direct line segment between these weights.
    So far, all the rows of $W'(:,K\cup J)$ have been copied into $W(:,I')$ at the right positions
    so we obtain that $W(:,I')=W'(:,I').$
    It follows that $\rank(F(:,I'))=\rank(F'(:,I'))=N$ and 
    thus we can apply Lemma \ref{lem:FW} to $(F,V,I')$ to arrive at some other point
    where $Z$ is independent of $F(:,\bar{I'}).$
    From here we can easily obtain $W(:,\bar{I'})=W'(:,\bar{I'})$ by taking a direct line segment between these variables.
    Till now we already have $W=W'.$
    Moreover, all the paths which we have followed leave the output $Z$ invariant.

\section{Proof of Theorem \ref{thm:ReLU}}
    \underline{Case 1}: $\min \Set{n_1,\ldots,n_{L-1}} \geq N.$
    	Let $\theta=(W_l,b_l)_{l=1}^L$ be an arbitrary point of some strict sublevel set $L_\alpha^s,$ for some $\alpha>p^*.$
	We will show that there is a continuous descent path starting from $\theta$ 
	on which the loss is non-increasing and gets arbitrarily close to $p^*.$
	Indeed, for every $\epsilon$ arbitrarily close to $p^*$ and $\epsilon\leq\alpha,$
	let $\hat{Y}\in\RR^{N\times m}$ be such that $\varphi(\hat{Y})\leq\epsilon.$
	Since $X$ has distinct rows, $n_1\geq N,$
	and the activation $\sigma$ satisfies Assumption \ref{ass:act2},
	an application of Lemma \ref{lem:full_rank_F} to $(X,W_1,b_1,W_2)$ shows that
	there is a continuous path with constant loss which leads $\theta$ to some other point where the output at the first hidden layer is full rank.
	So we can assume w.l.o.g. that it holds for $\theta$ that $\rank(F_1)=N.$
	By assumption $n_1\geq N$ and $F_1\in\RR^{N\times n_1}$, it follows that $F_1$ must have distinct rows,
	and thus by applying Lemma \ref{lem:full_rank_F} again to $(F_1,W_2,b_2,W_3)$
	we can assume w.l.o.g. that $\rank(F_2)=N.$
	By repeating this argument to higher layers using our assumption on the width, 
	we can eventually arrive at some $\theta=(W_l,b_l)_{l=1}^L$ where $\rank(F_{L-1})=N.$
	Thus there must exist $W_{L-1}^*\in\RR^{n_{L-1}\times m}$ so that $F_{L-1}W_L^*=\hat{Y}-\ones_Nb_L^T.$ 
	Consider the line segment $W_L(\lambda)=(1-\lambda)W_L+\lambda W_L^*,$
	then it holds by convexity of $\varphi$ that
	\begin{align*}
	    &\Phi\Big((W_l,b_l)_{l=1}^{L-1}, (W_L(\lambda),b_L)\Big)\\
	    =&\varphi\Big(F_{L-1} W_L(\lambda)+\ones_Nb_L^T\Big)\\
	    =&\varphi\Big((1-\lambda)(F_{L-1}W_L+\ones_Nb_L^T) + \lambda(F_{L-1}W_L^*+\ones_Nb_L^T)\Big)\\
	    \leq& (1-\lambda)\varphi(F_L) + \lambda\varphi(\hat{Y})\\
	    <& (1-\lambda)\alpha + \lambda\epsilon
	    \leq \alpha.
	\end{align*}
	Thus the whole line segment is contained in $L_\alpha^s.$
	By plugging $\lambda=1$ we obtain $\Big((W_l,b_l)_{l=1}^{L-1},(W_L^*,b_L)\Big)\in L_\alpha^s.$
	Moreover, it holds $\Phi\Big((W_l,b_l)_{l=1}^{L-1},(W_L^*,b_L)\Big)=\varphi(\hat{Y})\leq\epsilon.$
	As $\epsilon$ can be chosen arbitrarily close to $p^*,$
	we conclude that $\Phi$ can be made arbitrarily close to $p^*$ in every strict sublevel set 
	which implies that $\Phi$ has no bad local valleys.
	
    \underline{Case 2}: $\min \Set{n_1,\ldots,n_{L-1}} \geq 2N.$
	Our first step is similar to the first step in the proof of Theorem \ref{thm:connected_sublevel_sets},
	which we repeat below for completeness.
	Let $\theta=(W_l,b_l)_{l=1}^L,\theta'=(W_l',b_l')_{l=1}^L$ be arbitrary points in some sublevel set $L_\alpha.$
	It is sufficient to show that there is a connected path between $\theta$ and $\theta'$
	on which the loss is not larger than $\alpha.$
	In the following, we denote $F_k$ and $F_k'$ as the output at a layer $k$ for $\theta$ and $\theta'$ respectively.
	The output at the first layer is:
	\begin{align*}
	    F_1=\sigma([X,\ones_N][W_1^T,b_1]^T),\\
	    F_1'=\sigma([X,\ones_N][W_1'^T,b_1']^T).
	\end{align*}
	By applying Lemma \ref{lem:full_rank_F} to $(X,W_1,b_1,W_2)$ and $(X,W_1',b_1',W_2')$ 
	we can assume w.l.o.g. that both $F_1$ and $F_1'$ have full rank, since otherwise
	there is a continuous path starting from each point 
	and leading to some other point where the rank condition is fulfilled 
	and the network output at second layer is invariant on the path.
	Once $F_1$ and $F_1'$ have full rank, 
	we can apply Lemma \ref{lem:equalization2N} to $\Big([X,\ones_N],[W_1^T,b_1]^T,W_2,[W_1'^T,b_1']^T\Big)$
	in order to drive $\theta$ to some other point where the parameters of the first layer are all equal to the corresponding ones of $\theta'.$
	So we can assume w.l.o.g. that $(W_1,b_1)=(W_1',b_1').$
	
	Once the network parameters of $\theta$ and $\theta'$ coincide at the first hidden layer,
	we can view the output of this layer, which is equal for both points (i.e., $F_1=F_1'$),
	as the new training data for the subnetwork from layer $2$ till layer $L.$
	Same as before, we first apply Lemma \ref{lem:full_rank_F} to $(F_1,W_2,b_2,W_3)$ and $(F_1',W_2',b_2',W_3')$ 
	to drive $\theta$ and $\theta'$ respectively to other new points where both $F_2$ and $F_2'$ have full rank.
	Note that this path only acts on $(W_2,b_2,W_3)$ and thus leaves everything else below layer $2$ invariant, in particular
	we still have $F_1=F_1'.$
	Then we can apply Lemma \ref{lem:equalization2N} again to the tuple $\Big([F_1,\ones_N],[W_2^T,b_2]^T,W_3,[W_2'^T,b_2']^T\Big)$
	to drive $\theta$ to some other point where $(W_2,b_2)=(W_2',b_2').$
	
	By repeating the above argument to the last hidden layer, 
	we can make all network parameters of $\theta$ and $\theta'$ coincide for all layers, except the output layer.
	In particular, the path that each $\theta$ and $\theta'$ has followed has invariant loss.
	The output of the last hidden layer for these points is $A\bydef F_{L-1}=F_{L-1}'.$
	The loss at these two points can be rewritten as 
	\begin{align*}
	    &\Phi(\theta)=\varphi\Big([A,\ones_N]\begin{bmatrix}W_L\\b_L^T\end{bmatrix}\Big),\\
	    &\Phi(\theta')=\varphi\Big([A,\ones_N]\begin{bmatrix}W_L'\\b_L'^T\end{bmatrix}\Big).
	\end{align*}
	By convexity of $\varphi$, it follows that the line segment 
	$$(1-\lambda)\begin{bmatrix}W_L\\b_L^T\end{bmatrix}+\lambda \begin{bmatrix}W_L'\\b_L'^T\end{bmatrix}$$
	must yield a continuous path between $(W_L,b_L)$ and $(W_L',b_L')$
	where the loss of every point along this path is upper bounded by $\max(\Phi(\theta),\Phi(\theta'))$, 
	which is clearly not larger than $\alpha.$
	Thus the entire line segment must belong to $L_\alpha$ which implies that $L_\alpha$ is connected.

\end{document}